\documentclass[10pt]{article} % For LaTeX2e
\usepackage[preprint]{tmlr}

\usepackage{amsmath,amsfonts,bm}

\def\eqref#1{equation~\ref{#1}}
\def\1{\bm{1}}

\DeclareMathAlphabet{\mathsfit}{\encodingdefault}{\sfdefault}{m}{sl}
\SetMathAlphabet{\mathsfit}{bold}{\encodingdefault}{\sfdefault}{bx}{n}

\usepackage{hyperref}
\usepackage{url}
\usepackage{amsmath,amssymb,amsfonts}
\usepackage{amsthm}
\usepackage{mathtools}
\usepackage{bm}
\usepackage{bbm}
\usepackage{enumitem}
\usepackage{graphicx}
\usepackage{subcaption}

\usepackage{algorithm}
\usepackage{algpseudocode}

\newtheorem{theorem}{Theorem}[section]
\newtheorem{proposition}[theorem]{Proposition}

\newtheorem{corollary}[theorem]{Corollary}

\theoremstyle{definition}

\theoremstyle{remark}

\title{A Function-Space Approach to the Statistical Mechanics of Learning Dynamics}

\author{Yizhou Zhang$^1$, Weichen Wu$^2$, Lun Du$^3$, Zhengjie Miao$^4$ \\
$^1$ Variational AI, $^2$ The Voleon Group, $^3$ Independent Researcher, $^4$ Simon Fraser University \\ 
\texttt{\{zyizhou96,weichen.wu.1996,dulun2834\}@gmail.com},\quad \texttt{zhengjie@sfu.ca} 
}

\def\month{MM}  % Insert correct month for camera-ready version
\def\year{YYYY} % Insert correct year for camera-ready version
\def\openreview{\url{https://openreview.net/forum?id=XXXX}} % Insert correct link to OpenReview for camera-ready version

\begin{document}

\maketitle

\begin{abstract}
Deep neural networks exhibit surprisingly regular macroscopic behavior despite highly nonlinear dynamics in a vast parameter space. We develop a statistical-mechanical description of learning directly in function space, treating parameter configurations as microscopic realizations and functions together with their dynamical operators as macroscopic variables. For mean-squared loss, the exact error dynamics are governed by the learning operator \(M=JJ^\ast\). The bare conditional stochastic dynamics supplies a dynamical Boltzmann weight, while parameter-space multiplicity contributes a function-space density of states whose local curvature defines a statistical operator \(B\). Conditioning on a current error macrostate and integrating over the resulting local fluctuation ensemble yields the conditional free-energy contribution \(\Phi_{\mathrm{fluc}}(M;B)=\frac{\sigma_\xi^2}{2}\log\det(M^{-1}+B)+\mathrm{const}\) on the active sector. At fixed spectrum, this contribution is rotationally stationary when \([M,B]=0\), is minimized by pairing large eigenvalues of \(M\) with small eigenvalues of \(B\), and supplies a local restoring force contribution against rotational mismatch. For ReLU-type function spaces under mild stable statistical conditions, \(B\) takes the form \(B=\sigma_\xi^2L^\ast\mathcal K L\), with \(L\) measuring coarse-grained second-order structure. Its low-\(B\) sector therefore corresponds, up to the bounded anisotropy of \(\mathcal K\), to directions of low structural curvature. Within this ReLU specialization, the fluctuation-induced contribution therefore supplies a preference for pairing faster relaxation with this low-curvature, data-adaptive sector. These results suggest function space as a natural macroscopic level for studying stable collective organization in learning, with a concrete neural-network model entering as a realization of the macroscopic statistical theory rather than defining its form from the outset.
\end{abstract}

\section{Introduction}
\label{sec:introduction}

Deep neural networks pose an unusual problem for theory. Their training dynamics arise from strongly nonlinear interactions among an enormous number of parameters, yet their macroscopic behavior is often strikingly regular. Across several learning domains, performance obeys smooth scaling relations over large changes in model size, data, and compute, while appropriate parameterizations can make optimization hyperparameters transferable across orders of magnitude in width \cite{hestness2017scaling,kaplan2020scaling,yang2021tensorprogramsv}. Regular organization also appears within the learned dynamics and representations themselves: neural-network training repeatedly develops characteristic spectral structure \cite{cohen2021edge}, and learned functions exhibit systematic preferences for smooth, data-dependent directions \cite{rahaman2019spectral,kadkhodaie2024geometry}. These observations are striking precisely because such regularity is not obvious from the microscopic equations of training. They raise a basic mechanistic question: 
\begin{equation} 
\boxed{ \textit{How can highly nonlinear learning dynamics give rise to stable and scalable macroscopic organization?} } \label{eq:intro-central-question} 
\end{equation}
This question implies that a useful theory of deep learning should therefore do more than track
individual parameter trajectories. It should identify a level of
description at which robust collective structure becomes visible.

A particularly successful step in this direction has been to move from
parameter space to function space. In the neural tangent kernel regime,
a highly nonlinear parameterized model reduces to an approximately
closed linear dynamics governed by a kernel that remains nearly fixed
throughout training
\cite{jacot2018ntk,chizat2019lazy}. This reveals that a complicated
microscopic system can admit a much simpler macroscopic description.
The price of this closure, however, is that the geometry governing
learning is effectively frozen. Feature-learning theories relax this
restriction and allow representations, kernels, and their spectra to
evolve
\cite{woodworth2020kernelrich,lauditi2025adaptive,lauditi2026spectral}.
The two regimes therefore expose a useful tension: fixing the
dynamical geometry yields a simple closed description, whereas allowing
the geometry itself to adapt restores an essential part of learning
but also reintroduces nonlinear, self-consistent, and often
model-dependent dynamics. This motivates asking whether the evolving
macroscopic organization of learning can be characterized without
resolving its full microscopic trajectory.

The combination of enormous microscopic dimension, strong
nonlinearity, and reproducible macroscopic structure makes statistical
mechanics a natural framework for this question. Statistical-mechanical
ideas have long been used to study neural networks through Gibbs
ensembles, high-dimensional landscapes, stochastic-gradient
diffusions, mean-field descriptions, and collective order parameters
\cite{bahri2020statistical,mandt2016variational,mandt2017sgd,
chaudhari2018sgd}. These approaches have established that useful
macroscopic laws can emerge after coarse graining over microscopic
degrees of freedom. In many existing formulations, however, the
statistical description is constructed either directly in parameter
space or through a set of macroscopic variables chosen for a specific
model or limit. A comparatively unexplored possibility is to formulate
the statistical mechanics of learning directly at the level where the
learned object itself evolves: function space.

This is the perspective developed in the present work. A central
methodological distinction is that a neural-network model is treated as
a microscopic realization of the statistical theory rather than as its
starting point. We first formulate the macroscopic variables and
conditional statistical law in function space; a concrete architecture
then determines which dynamical operators and microstate geometries are
realizable, and therefore how the macroscopic law is instantiated. This
does not make architecture irrelevant: both the instantaneous learning
operator and the multiplicity of microscopic realizations remain
model-dependent. Rather, it separates the form of the organizing
principle from its model-specific realization. This distinction is
particularly natural for macroscopic regularities that persist across
architectures: their concrete realization may vary, while the form of
their organizing mechanism need not originate from any one microscopic
model.

Parameter configurations are treated as microscopic realizations, while
functions and the operators governing their evolution provide the
macroscopic variables. Scalar quantities such as the loss remain
important observables, but they retain only the magnitude of the
prediction error and discard much of its directional and spectral
structure. Function space lies between these two extremes: it
coarse-grains over redundant parameterizations while preserving the
geometry of learning. For mean-squared loss, the exact function-space
gradient dynamics take the simple form
\begin{equation}
    \dot e=-Me,
    \qquad
    M=JJ^\ast,
    \label{eq:intro-error-dynamics}
\end{equation}
so the spectrum and eigendirections of $M$ directly determine the
relaxation geometry of the error. The parameterization enters this
description through the multiplicity of microscopic realizations of a
given function-space state. Denoting the corresponding density of
states by $\Omega(e)$, its local curvature defines
\begin{equation}
    \boxed{
    B(r)
    =
    -\sigma_\xi^2
    \nabla_e^2\log\Omega(e)\big|_{e=r}.
    }
    \label{eq:intro-B}
\end{equation}
Thus $M$ describes the dynamical geometry of learning, while $B$
describes the statistical geometry induced by the underlying parameter
microstates. Only the compression of $B$ to the finite-dimensional
active sector of $M$ enters the determinants and commutators below.

Conditioning on a current error macrostate and integrating over the
local function-space fluctuation ensemble yields a conditional
free-energy contribution that depends on the dynamical operator,
\begin{equation}
    \boxed{
    \Phi_{\mathrm{fluc}}(M;B)
    =
    \frac{\sigma_\xi^2}{2}
    \log\det(M^{-1}+B)
    +\mathrm{const}.
    }
    \label{eq:intro-operator-free-energy}
\end{equation}
Its rotational structure produces a sharp conditional preference. At
fixed spectrum,
\begin{equation}
    \boxed{
    \delta_{\mathrm{rot}}\Phi_{\mathrm{fluc}}=0
    \quad\Longleftrightarrow\quad
    [M,B]=0,
    }
    \label{eq:intro-commuting}
\end{equation}
while the free-energy minimum pairs the spectra as
\begin{equation}
    \boxed{
    m_{\mathrm{large}}
    \longleftrightarrow
    b_{\mathrm{small}}.
    }
    \label{eq:intro-reverse-pairing}
\end{equation}
The matched state further has positive rotational curvature in every
nondegenerate pairwise direction. Accordingly, this conditional
free-energy contribution supplies a local restoring thermodynamic force
against operator mismatch. We do not assume that this term exhausts
the full slow dynamics of $M$; rather, it identifies one definite
thermodynamic bias contributed by the local fluctuation sector. The
complete local free energy also contains a distinct macrostate term,
$\frac12\langle r_a,M^{-1}r_a\rangle_p$, which supplies an error-directed
orientational preference; Sec.~\ref{sec:operator-free-energy} separates
these two contributions explicitly.

We then examine the physical meaning of this abstract statistical
geometry in a concrete class of function spaces. ReLU networks
represent continuous piecewise-affine functions whose second-order
structure is concentrated on activation-cell boundaries. After coarse
graining, this structure can be represented by a linear operator
$L\simeq \mathcal C D^2$. Under mild statistical boundary conditions
on the local microstate ensemble, the curvature operator becomes
\begin{equation}
    \boxed{
    B
    =
    \sigma_\xi^2
    L^\ast\mathcal K L,
    }
    \label{eq:intro-B-relu}
\end{equation}
where $\mathcal K$ is a positive structural entropy metric. Hence, for
an eigenmode $B\phi_i=b_i\phi_i$,
\begin{equation}
    b_i
    \asymp
    \|L\phi_i\|^2,
    \label{eq:intro-b-smoothness}
\end{equation}
and the low-$B$ sector corresponds, up to the bounded anisotropy of
$\mathcal K$, to directions of small coarse-grained structural
curvature. Combining this identification with the thermodynamic
matching condition gives
\begin{equation}
    \boxed{
    m_{\mathrm{large}}
    \longleftrightarrow
    \text{low structural-curvature data-adaptive sector}.
    }
    \label{eq:intro-final-matching}
\end{equation}
Within this class of function spaces, the conditional thermodynamic
contribution therefore favors pairing faster relaxation with the
low-curvature sector of the data-weighted functional geometry.

Taken together, these results suggest that function space provides a
natural macroscopic level for the statistical mechanics of learning.
At this level, the learning dynamics can be separated from the
statistical constraints supplied by the underlying parameterization:
the former determines how function-space states evolve, while the
latter determines which such states admit many microscopic
realizations and how those realizations are organized. In the present
work, this separation reveals a thermodynamic bias in the direction
of feature evolution. More broadly, the same viewpoint may provide a route for
studying other stable collective structures of learning---including
representation geometry, spectral organization, and stability---without
requiring a complete description of the microscopic parameter
trajectory.

\paragraph{Contributions.}
Our main contributions are:
\begin{itemize}
    \item We develop a conditional statistical-mechanical formulation
    of neural-network training directly in function space. The
    macroscopic statistical law is formulated before choosing a
    particular architecture, with concrete neural networks entering as
    microscopic realizations of that law.

    \item We show how parameter-space microstate multiplicity induces
    a function-space density of states and a local statistical
    curvature operator $B$, thereby separating the dynamical geometry
    $M$ from the statistical geometry supplied by the
    parameterization.

    \item We derive the conditional fluctuation contribution to the
    operator free energy and prove that its rotational stationary
    points satisfy $[M,B]=0$. On a fixed-spectrum orbit this
    contribution is globally minimized by reverse spectral pairing,
    with larger eigenvalues of $M$ matched to smaller eigenvalues of
    $B$, and its gradient supplies a local restoring force contribution
    around the matched state.

    \item For ReLU-type function spaces under mild stable statistical
    boundary conditions, we show that
    $B=\sigma_\xi^2L^\ast\mathcal K L$ and that its spectrum measures
    coarse-grained structural curvature up to the bounded anisotropy
    of the structural entropy metric. Combining this result with
    the conditional operator preference yields a thermodynamic bias
    toward pairing faster relaxation with the low-curvature
    data-adaptive sector.
\end{itemize}

The remainder of the paper develops these results in three steps.
Section~\ref{sec:conditional-stat-mech} constructs the conditional
statistical mechanics of error fluctuations.
Section~\ref{sec:operator-matching} analyzes the conditional operator
free-energy contribution and its matching geometry.
Section~\ref{sec:relu-geometry} connects the resulting microstate
curvature to the structural smoothness of ReLU function spaces.

\section{Related Work}
\label{sec:related-work}

\paragraph{Kernel limits and feature learning.}
A large body of work characterizes neural-network training through
function-space kernels. In the infinite-width neural tangent kernel
(NTK) limit, gradient descent is governed by an approximately fixed
kernel, and different function-space modes are learned at rates set by
the corresponding kernel eigenvalues
\cite{jacot2018ntk}. This fixed-kernel description is closely related
to the lazy-training regime, in which the network remains near its
initial linearization
\cite{chizat2019lazy}. Subsequent work has emphasized the distinction
between such kernel regimes and richer training regimes in which the
representation itself evolves
\cite{woodworth2020kernelrich}. More recent analyses explicitly derive
adaptive kernels and evolving spectral structure in feature-learning
limits
\cite{lauditi2025adaptive,lauditi2026spectral}.

Our work concerns this latter regime, but reverses the usual order of
construction. Rather than beginning from a specified neural-network
model and deriving model-specific macroscopic variables, we formulate
the conditional statistical mechanics at the function-space level and
then ask how a concrete model realizes it. Within this formulation, the
fluctuation contribution depends on the relative geometry of the
dynamical operator and the parameterization-induced microstate geometry.
Its rotational extrema require $M$ to commute with a
microstate-curvature operator $B$, and its minima pair large dynamical
eigenvalues with small microstate-curvature eigenvalues. The ReLU
specialization in Sec.~\ref{sec:relu-geometry} is therefore a concrete
realization of the general operator-level construction rather than its
starting point.

\paragraph{Stochastic gradient dynamics and statistical mechanics.}
Diffusion and statistical-mechanical descriptions of stochastic
optimization have a long history in machine learning. Constant-step
stochastic gradient dynamics can be approximated locally by diffusion
or Ornstein--Uhlenbeck processes, leading to effective stationary
distributions and Bayesian interpretations
\cite{mandt2016variational,mandt2017sgd}. Other work has emphasized
that realistic stochastic-gradient noise is generally anisotropic and
can generate genuinely nonequilibrium behavior
\cite{chaudhari2018sgd}. More broadly, statistical-mechanical tools
such as effective energies, free energies, and high-dimensional
random systems have played an important role in theoretical studies of
deep learning
\cite{bahri2020statistical}. The Langevin and Fokker--Planck machinery
used in our bare conditional dynamics follows the standard theory of
reversible diffusion processes
\cite{risken1989fokker,pavliotis2014stochastic,jordan1998variational}.

The statistical-mechanical object considered here differs from the
usual parameter-space loss landscape. We first express mean-squared
gradient flow directly in error space,
\begin{equation}
    \dot e=-Me,
\end{equation}
and show that isotropic stochastic forcing in the error source induces
the mobility $K=M^2$. This allows the bare conditional dynamics to be represented
as an overdamped Langevin process with the quadratic energy
\begin{equation}
    U_M(e)
    =
    \frac{1}{2}
    \langle e,M^{-1}e\rangle_p .
\end{equation}
We then combine this dynamical weight with a parameter-space
reference measure and push the resulting microstate ensemble forward
to function space. The induced density of states $\Omega(e)$ counts
the multiplicity of parameter microstates associated with the same
coarse-grained function-space state, producing the effective
contribution
\begin{equation}
    -\sigma_\xi^2\log\Omega(e)
\end{equation}
to the conditional free energy. Thus the entropy in our formulation
is associated with parameter-space multiplicity at fixed
function-space macrostate, rather than solely with the local volume of
a loss minimum.

\paragraph{Spectral bias and smooth feature learning.}
Neural networks are known to exhibit a spectral bias toward learning
simpler or lower-frequency components of a target function earlier in
training
\cite{rahaman2019spectral}. Such behavior is commonly characterized
using Fourier modes, kernel eigenfunctions, or other externally chosen
spectral decompositions, and the resulting bias can depend strongly on
the geometry of the data distribution. In the present work, smoothness
instead emerges from the same operator geometry that enters feature
learning. We derive a microstate-curvature operator
\begin{equation}
    B_p(r)
    =
    \sigma_\xi^2
    L_p^\ast
    \mathcal K_p(r)
    L,
\end{equation}
whose eigendirections are defined directly in the data-weighted
function space $L^2(\mathcal X,p)$. Under regular structural
statistics, its eigenvalues satisfy
\begin{equation}
    b_i
    \asymp
    \|L\phi_i\|^2,
\end{equation}
so that small-$b_i$ identifies low structural curvature up to the
bounded anisotropy of the structural metric. Combining this with the
conditional operator preference gives
\begin{equation}
    m_{\mathrm{large}}
    \longleftrightarrow
    b_{\mathrm{small}}
    \longleftrightarrow
    \text{low structural-curvature data-adaptive sector}.
\end{equation}
The resulting structural bias is therefore not imposed through a fixed
Fourier basis or a fixed kernel spectrum; it is defined intrinsically
by the data-weighted microstate geometry of the learned function
space.

\paragraph{Piecewise-linear geometry of ReLU networks.}
ReLU networks represent continuous piecewise-affine functions whose
input space is partitioned into activation regions. This viewpoint has
been developed through spline and piecewise-linear descriptions of
deep networks
\cite{balestriero2018spline}, as well as variational and representer
theorems connecting ReLU networks to spline-like function spaces and
second-order regularity
\cite{unser2019representer}. We use this geometric structure as the
architectural input to our statistical theory. Within each activation
cell the Hessian vanishes, while second-order structure is
concentrated on cell boundaries through jumps of the gradient. After
coarse graining, this motivates the structural field
\begin{equation}
    h=Lc,
    \qquad
    L\simeq \mathcal C D^2.
\end{equation}
Under mild statistical boundary conditions on the entropy of these
structural states, the corresponding function-space microstate
curvature is
\begin{equation}
    B
    =
    \sigma_\xi^2
    L^\ast\mathcal K L.
\end{equation}
The role of ReLU geometry in our framework is therefore to provide the
structural operator whose entropy curvature enters the conditional
thermodynamic preference analyzed in
Sec.~\ref{sec:operator-matching}.

\section{Conditional Statistical Mechanics of Error Fluctuations}
\label{sec:conditional-stat-mech}

\subsection{Exact function-space gradient dynamics}
\label{sec:function-space-dynamics}

We begin by expressing gradient descent directly in function space.
Let $p(x)$ denote the data distribution and define
\begin{equation}
    \mathcal H=L^2(\mathcal X,p),
\end{equation}
with inner product
\begin{equation}
    \langle f,g\rangle_p
    =
    \int_{\mathcal X} f(x)g(x)\,p(x)\,dx.
\end{equation}
For a model $f_\theta$ and target function $y$, define the error field
\begin{equation}
    e_\theta=f_\theta-y
\end{equation}
and the mean-squared loss
\begin{equation}
    \mathcal L(\theta)=\frac12\|e_\theta\|_p^2.
\end{equation}
Let
\begin{equation}
    J_\theta=D_\theta f_\theta
\end{equation}
denote the Jacobian mapping infinitesimal parameter perturbations to
function-space perturbations. Its adjoint $J_\theta^\ast$ is defined
with respect to the parameter-space inner product and
$\langle\cdot,\cdot\rangle_p$. Then
\begin{equation}
    \nabla_\theta\mathcal L=J_\theta^\ast e_\theta,
\end{equation}
and continuous-time gradient flow gives
\begin{equation}
    \dot\theta=-J_\theta^\ast e_\theta.
\end{equation}
Applying the chain rule,
\begin{equation}
    \dot e_\theta
    =
    J_\theta\dot\theta
    =
    -J_\theta J_\theta^\ast e_\theta.
\end{equation}
This motivates the function-space dynamical operator
\begin{equation}
    \boxed{
    M_\theta\equiv J_\theta J_\theta^\ast,
    }
\end{equation}
which is self-adjoint and positive semidefinite. The exact error
dynamics is therefore
\begin{equation}
    \boxed{
    \dot e_\theta=-M_\theta e_\theta.
    }
    \label{eq:error-gradient-flow}
\end{equation}
Importantly, Eq.~\eqref{eq:error-gradient-flow} does not require
linearizing the network around a fixed parameter configuration:
$M_\theta$ may evolve along the training trajectory.

For a finite-parameter model, $M$ has finite rank. We work throughout
on a fixed finite-dimensional active sector
\begin{equation}
    \mathcal H_a\equiv\overline{\operatorname{Ran}M},
    \qquad
    n\equiv\dim\mathcal H_a<\infty,
    \label{eq:active-sector}
\end{equation}
and write $P_a$ for the orthogonal projection onto $\mathcal H_a$.
The restriction of $M$ to $\mathcal H_a$ is taken to be positive
definite. All inverses, traces, determinants, and orthogonal rotations
in the following sections are understood on $\mathcal H_a$. Directions
in $\ker M$ do not relax under Eq.~\eqref{eq:error-gradient-flow} and
are excluded from the conditional fluctuation sector.

\subsection{Conditional Langevin dynamics and dynamical weight}
\label{sec:conditional-langevin}

We next construct the dynamical statistical weight associated with the
local error dynamics. We condition on a macroscopic configuration for
which $J$ and
\begin{equation}
    M=JJ^\ast
\end{equation}
are treated as fixed parameters of the conditional problem. This
conditioning defines a family of local ensembles; by itself it does
not require a dynamical separation of time scales between $M$ and the
error fluctuations.

Consider the standard constant-mobility overdamped Langevin equation
\begin{equation}
    dx_t
    =
    -K\nabla U(x_t)\,dt
    +
    \sqrt{2TK}\,dW_t,
    \label{eq:standard-overdamped-langevin}
\end{equation}
where $U$ is an energy, $K$ is a positive mobility operator, and $T$
sets the stochastic scale. Such dynamics admit the standard
It\^{o} Fokker--Planck and equilibrium structure
\cite{risken1989fokker,pavliotis2014stochastic}.

\paragraph{Stochastic mobility.}
We model the stochastic source in error space as isotropic,
\begin{equation}
    \mathbb E[dW_t]=0,
    \qquad
    \mathbb E[dW_t dW_t^\ast]=I\,dt.
    \label{eq:isotropic-error-noise}
\end{equation}
In the infinite-dimensional notation, $W_t$ may be understood as a
cylindrical Wiener process; because $M$ has finite rank,
$M\,dW_t$ is well defined on $\mathcal H_a$. We assume that this source
enters parameter space through the same Jacobian channel as the
deterministic gradient,
\begin{equation}
    d\theta_{\mathrm{noise}}
    =
    \sqrt{2}\,\sigma_\xi J^\ast dW_t.
    \label{eq:parameter-noise}
\end{equation}
With $J$ fixed in the conditional construction,
\begin{align}
    de_{\mathrm{noise}}
    &=
    J\,d\theta_{\mathrm{noise}}
    \nonumber\\
    &=
    \sqrt{2}\,\sigma_\xi JJ^\ast dW_t
    =
    \sqrt{2}\,\sigma_\xi M\,dW_t.
    \label{eq:function-space-noise}
\end{align}
Combining this with Eq.~\eqref{eq:error-gradient-flow} gives the bare
conditional process
\begin{equation}
    \boxed{
    de=-Me\,dt+\sqrt{2}\,\sigma_\xi M\,dW_t.
    }
    \label{eq:bare-conditional-sde}
\end{equation}
Its stochastic increment has covariance
\begin{equation}
    \mathbb E[
    de_{\mathrm{noise}}de_{\mathrm{noise}}^\ast]
    =
    2\sigma_\xi^2M^2\,dt,
    \label{eq:function-noise-covariance}
\end{equation}
so comparison with Eq.~\eqref{eq:standard-overdamped-langevin}
identifies
\begin{equation}
    \boxed{K=M^2},
    \qquad
    T=\sigma_\xi^2.
    \label{eq:mobility-M2}
\end{equation}

\paragraph{Dynamical energy.}
To represent the deterministic drift in Langevin form, we seek
$U_M(e)$ such that
\begin{equation}
    K\nabla_e U_M(e)=Me.
    \label{eq:construct-UM}
\end{equation}
On $\mathcal H_a$, $M$ is invertible, and with $K=M^2$ this gives
\begin{equation}
    \nabla_e U_M=M^{-1}e.
\end{equation}
Hence
\begin{equation}
    \boxed{
    U_M(e)=\frac12\langle e,M^{-1}e\rangle_p.
    }
    \label{eq:UM}
\end{equation}
Equation~\eqref{eq:bare-conditional-sde} can therefore be written as
\begin{equation}
    de
    =
    -M^2\nabla_eU_M(e)\,dt
    +
    \sqrt{2\sigma_\xi^2M^2}\,dW_t.
    \label{eq:bare-langevin-representation}
\end{equation}

\begin{theorem}[Bare conditional Langevin equilibrium]
\label{thm:bare-conditional-equilibrium}
For the conditional process
Eq.~\eqref{eq:bare-langevin-representation} on $\mathcal H_a$, the
density $p(e,t\mid M)$ satisfies
\begin{equation}
    \boxed{
    \partial_t p
    =
    \nabla_e\cdot
    \left[
        M^2
        \left(
            p\nabla_e U_M
            +
            \sigma_\xi^2\nabla_e p
        \right)
    \right].
    }
    \label{eq:bare-fokker-planck}
\end{equation}
Its reversible stationary density with respect to the reference volume
$de$ on $\mathcal H_a$ is
\begin{equation}
    \boxed{
    p_0(e\mid M)
    =
    \frac1{Z_0}
    \exp\left[-\frac{U_M(e)}{\sigma_\xi^2}\right].
    }
    \label{eq:bare-gibbs-density}
\end{equation}
Moreover,
\begin{equation}
    \boxed{
    \mathcal F_0[p]
    =
    \int p(e)U_M(e)\,de
    +
    \sigma_\xi^2\int p(e)\log p(e)\,de
    }
    \label{eq:bare-probability-free-energy}
\end{equation}
is non-increasing along the conditional Fokker--Planck dynamics:
\begin{equation}
    \boxed{
    \frac{d\mathcal F_0}{dt}
    =
    -
    \int p(e)
    \left\langle
        \nabla_e\frac{\delta\mathcal F_0}{\delta p},
        M^2\nabla_e\frac{\delta\mathcal F_0}{\delta p}
    \right\rangle_p
    de
    \leq0.
    }
    \label{eq:bare-free-energy-dissipation}
\end{equation}
\end{theorem}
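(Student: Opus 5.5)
We treat the conditional process Eq.~\eqref{eq:bare-langevin-representation} as a finite-dimensional Itô diffusion on $\mathcal H_a\cong\mathbb R^n$. We fix an orthonormal basis of $\mathcal H_a$ with respect to $\langle\cdot,\cdot\rangle_p$, so that $M$ is a symmetric positive definite matrix and $de$ is Lebesgue measure. The three claims are then standard facts for constant-mobility reversible diffusions (cf.\ \cite{risken1989fokker,pavliotis2014stochastic}). We establish them in order: Fokker--Planck equation, stationarity and reversibility, and free-energy dissipation.

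\textbf{Step 1: the Fokker--Planck equation.} The drift is $b(e)=-Me=-M^2\nabla_eU_M$, and the diffusion coefficient is $\sigma=\sqrt2\,\sigma_\xi M$. Both are constant in $e$ except the linear drift, so the generator is
\[
\mathcal L f=\langle b,\nabla f\rangle_p+\sigma_\xi^2\operatorname{tr}(M^2\nabla^2 f).
\]
We take its formal adjoint by integrating by parts against smooth compactly supported test functions. This gives $\partial_t p=-\nabla\cdot(bp)+\sigma_\xi^2\nabla\cdot(M^2\nabla p)$, where we use that $M^2$ is constant and symmetric. Substituting $b=-M^2\nabla U_M$ gives Eq.~\eqref{eq:bare-fokker-planck}. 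The only point to note is that constant mobility generates no Itô-correction (spurious drift) term.

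\textbf{Step 2: Gibbs stationarity and reversibility.} For $p_0\propto e^{-U_M/\sigma_\xi^2}$ we have $\sigma_\xi^2\nabla p_0=-p_0\nabla U_M$. Hence the probability current $j=-M^2(p_0\nabla U_M+\sigma_\xi^2\nabla p_0)$ vanishes identically, which is detailed balance. Vanishing current gives stationarity, and it is also equivalent to symmetry of $\mathcal L$ in $L^2(p_0)$, since $\mathcal L f=\sigma_\xi^2 p_0^{-1}\nabla\cdot(p_0M^2\nabla f)$. That symmetry is reversibility. Normalizability holds because $M^{-1}$ is positive definite on $\mathcal H_a$, so $U_M$ is a coercive quadratic and $Z_0=(2\pi\sigma_\xi^2)^{n/2}\det(M)^{1/2}<\infty$. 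Uniqueness, if we want it, follows from nondegeneracy of $M^2$ (ellipticity) together with confinement.

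\textbf{Step 3: dissipation.} We compute $\delta\mathcal F_0/\delta p=U_M+\sigma_\xi^2(\log p+1)$, so $\nabla(\delta\mathcal F_0/\delta p)=\nabla U_M+\sigma_\xi^2\nabla\log p$. The Fokker--Planck equation becomes $\partial_tp=\nabla\cdot\bigl(pM^2\nabla\tfrac{\delta\mathcal F_0}{\delta p}\bigr)$. Differentiating, $\frac{d\mathcal F_0}{dt}=\int\frac{\delta\mathcal F_0}{\delta p}\,\partial_tp\,de$, where the constant term drops out by mass conservation. One integration by parts then yields the stated negative quadratic form, which is $\le0$ because $M^2\succeq0$.

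The main obstacle is justifying this calculation: discarding boundary terms at infinity and differentiating under the integral. We would handle it by restricting to initial densities with finite second moment and finite entropy. The process is an Ornstein--Uhlenbeck process, so $p(\cdot,t)$ is explicitly Gaussian-convolved and smooth for $t>0$, with Gaussian tails that kill the boundary terms. An alternative route is to note that $\mathcal F_0[p]-\mathcal F_0[p_0]=\sigma_\xi^2\,\mathrm{KL}(p\Vert p_0)$ and invoke the standard relative-entropy dissipation identity.
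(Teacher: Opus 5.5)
Your proposal is correct and follows the same route as the paper. You identify Eq.~\eqref{eq:bare-fokker-planck} as the It\^o forward equation, note that the current $j=-M^2(p\nabla U_M+\sigma_\xi^2\nabla p)$ vanishes at the Gibbs density, and obtain the dissipation from $\delta\mathcal F_0/\delta p=U_M+\sigma_\xi^2\log p+\mathrm{const}$ and one integration by parts. Your extras (generator symmetry in $L^2(p_0)$, the explicit $Z_0$, and the boundary-term discussion) go beyond the paper's formal argument, with one caveat: the time-$t$ density has genuinely Gaussian tails only for, e.g., compactly supported initial laws; with merely finite second moment you would instead rely on standard decay estimates or on the relative-entropy identity you mention.
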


\begin{proof}
The It\^{o} forward equation for
Eq.~\eqref{eq:bare-langevin-representation} is
Eq.~\eqref{eq:bare-fokker-planck}, with probability current
\begin{equation}
    j
    =
    -M^2
    \left(
        p\nabla_eU_M+\sigma_\xi^2\nabla_ep
    \right).
\end{equation}
For Eq.~\eqref{eq:bare-gibbs-density},
$\nabla_ep_0=-(p_0/\sigma_\xi^2)\nabla_eU_M$, so $j_0=0$.
The variational derivative of
Eq.~\eqref{eq:bare-probability-free-energy} is
\begin{equation}
    \frac{\delta\mathcal F_0}{\delta p}
    =
    U_M+\sigma_\xi^2\log p+\mathrm{const},
\end{equation}
and an integration by parts gives
Eq.~\eqref{eq:bare-free-energy-dissipation}.
\end{proof}

Theorem~\ref{thm:bare-conditional-equilibrium} supplies the
\emph{dynamical} Boltzmann weight relative to the reference volume in
function space. It does not count how many parameter configurations
realize a given error state. In particular, the bare noise process
above is not assumed to explore parameter-fiber directions. Parameter
multiplicity enters independently through the reference microstate
measure introduced next.

\subsection{Parameter microstates and the induced conditional ensemble}
\label{sec:parameter-microstates}

We take the background state $z$ to include the conditioned dynamical
operator $M$ together with the remaining constraints supplied by the
parameterization, architecture, and data, and let $\nu_z(d\theta)$
denote the corresponding reference measure over parameter microstates.
To define a regular density of states without invoking an
infinite-dimensional volume element, introduce a finite-dimensional
retained function-space sector
\begin{equation}
    \mathcal H_a
    \subseteq
    \mathcal H_R
    \subseteq
    \mathcal H,
    \qquad
    N_R\equiv\dim\mathcal H_R<\infty,
    \label{eq:retained-function-sector}
\end{equation}
with orthogonal projector $P_R$. The sector $\mathcal H_R$ contains the
active learning sector $\mathcal H_a=\operatorname{Ran}M$ but may also
retain additional coarse-grained function coordinates needed to define
the microstate statistics. We define
\begin{equation}
    \Psi_{R,z}:\theta
    \mapsto
    e_R
    \equiv
    P_R\!\left(f_\theta-y\right)
    \in\mathcal H_R
    \label{eq:parameter-to-error-map}
\end{equation}
and the corresponding pushforward reference measure
\begin{equation}
    \boxed{
    \mu_z=(\Psi_{R,z})_\#\nu_z.
    }
    \label{eq:reference-pushforward}
\end{equation}
Here $de_R$ denotes ordinary Lebesgue volume in an orthonormal coordinate
system on $\mathcal H_R$. We assume that the coarse-grained pushforward
is regular enough to admit a density,
\begin{equation}
    \boxed{
    \mu_z(de_R)=\Omega_z(e_R)\,de_R.
    }
    \label{eq:density-of-states}
\end{equation}
Thus $\Omega_z$ is a density of parameter microstates on the retained
coarse-grained function sector $\mathcal H_R$. No density with respect
to an infinite-dimensional ``function-space volume'' is assumed. The
operator calculations below use only the compression of its local
curvature to the active subspace $\mathcal H_a$.

The following factorization is the central statistical assumption that
combines the two ingredients above.

\paragraph{A1. Factorization of dynamical weight and microstate multiplicity.}
At fixed background state $z$ and dynamical operator $M$, we assume that
the microscopic statistical weight factorizes into a dynamical factor
that depends on a parameter configuration only through its induced error
state and a reference microstate measure that supplies the multiplicity
of such realizations. Equivalently,
\begin{equation}
    \boxed{
    \Pi_\ast(d\theta\mid M,z)
    \propto
    \exp\left[-\frac{U_M(P_a\Psi_{R,z}(\theta))}{\sigma_\xi^2}\right]
    \nu_z(d\theta).
    }
    \label{eq:conditional-factorization}
\end{equation}
The reference measure $\nu_z$ is not assumed to be dynamically sampled
by the bare $J^\ast$-channel noise. Rather, it encodes the conditional
multiplicity supplied by the parameterization and other degrees of
freedom included in $z$. Assumption A1 states that this multiplicity can
be combined with the bare dynamical weighting without an additional
fiber-dependent energetic factor. Possible dependence of the reference
microstate statistics on slower variables, including changes induced by
an actual motion of $M$, is outside the partial conditional comparison
performed below.

Under A1, the bare conditional dynamics supplies the Boltzmann factor
$\exp[-U_M(P_a e_R)/\sigma_\xi^2]$ on the active component, while the
pushforward of $\nu_z$ supplies the density of states on $\mathcal H_R$.
Their product defines the conditional canonical ensemble
\begin{equation}
    \boxed{
    P_\ast(de_R\mid M,z)
    =
    \frac1Z
    \exp\left[-\frac{U_M(P_a e_R)}{\sigma_\xi^2}\right]
    \mu_z(de_R).
    }
    \label{eq:induced-error-equilibrium-measure}
\end{equation}
By Assumption A1, Eq.~\eqref{eq:conditional-factorization} is the
corresponding parameter-space ensemble, and its pushforward under
$\Psi_{R,z}$ is Eq.~\eqref{eq:induced-error-equilibrium-measure}. If
Eq.~\eqref{eq:density-of-states} holds, then
\begin{equation}
    \boxed{
    p_\ast(e_R\mid M,z)
    =
    \frac1Z
    \Omega_z(e_R)
    \exp\left[-\frac{U_M(P_a e_R)}{\sigma_\xi^2}\right].
    }
    \label{eq:microstate-weighted-equilibrium}
\end{equation}

The two factors in Eq.~\eqref{eq:microstate-weighted-equilibrium} have
distinct origins. The exponential factor is the dynamical weight
derived from the bare conditional Langevin process, whereas
$\Omega_z(e_R)$ is a static density-of-states factor supplied by the
parameterization. Their product is therefore an ensemble construction;
Eq.~\eqref{eq:microstate-weighted-equilibrium} is not claimed to be the
stationary law generated by Eq.~\eqref{eq:bare-conditional-sde} alone.

Defining
\begin{equation}
    \boxed{
    F_M(e_R;z)
    =
    U_M(P_a e_R)-\sigma_\xi^2\log\Omega_z(e_R),
    }
    \label{eq:effective-conditional-free-energy}
\end{equation}
the conditional ensemble takes the Gibbs form
\begin{equation}
    \boxed{
    p_\ast(e_R\mid M,z)
    =
    \frac1Z
    \exp\left[-\frac{F_M(e_R;z)}{\sigma_\xi^2}\right].
    }
    \label{eq:effective-gibbs-density}
\end{equation}

\begin{proposition}[Microstate-weighted conditional ensemble]
\label{prop:microstate-weighted-equilibrium}
Let $\nu_z$ be a parameter-space reference measure and let
$\mu_z=(\Psi_{R,z})_\#\nu_z$. If
$\mu_z(de_R)=\Omega_z(e_R)\,de_R$, then the factorized canonical weighting
Eq.~\eqref{eq:conditional-factorization} induces the function-space
ensemble Eq.~\eqref{eq:microstate-weighted-equilibrium}, equivalently
the Gibbs form Eq.~\eqref{eq:effective-gibbs-density} generated by
$F_M(e_R;z)$.
\end{proposition}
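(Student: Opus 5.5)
The argument is a direct pushforward computation. The key observation is that the Boltzmann factor in Eq.~\eqref{eq:conditional-factorization} depends on $\theta$ only through $\Psi_{R,z}(\theta)$. It can therefore be written as $g\circ\Psi_{R,z}$ with
\[
g(e_R)=\exp\left[-U_M(P_a e_R)/\sigma_\xi^2\right].
\]
Since $U_M\ge0$ on $\mathcal H_a$ because $M^{-1}$ is positive definite there, $g$ is a bounded, nonnegative, measurable function on $\mathcal H_R$.

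The proof would proceed in four steps.

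\begin{enumerate}
\item Write the unnormalized parameter-space measure as $\tilde\Pi(d\theta)=g(\Psi_{R,z}(\theta))\,\nu_z(d\theta)$, with normalizer $Z=\int g(\Psi_{R,z}(\theta))\,\nu_z(d\theta)$.

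\item For any bounded measurable test function $\varphi$ on $\mathcal H_R$, apply the change-of-variables formula for pushforward measures:
\[
\int\varphi(\Psi_{R,z}(\theta))\,g(\Psi_{R,z}(\theta))\,\nu_z(d\theta)=\int_{\mathcal H_R}\varphi(e_R)\,g(e_R)\,\mu_z(de_R).
\]
This identifies $(\Psi_{R,z})_\#\Pi_\ast$ with Eq.~\eqref{eq:induced-error-equilibrium-measure}. In particular, applying it with $\varphi\equiv1$ shows that both normalizing constants equal the same $Z$.

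\item Substitute the hypothesis $\mu_z(de_R)=\Omega_z(e_R)\,de_R$. The pushforward measure then has Lebesgue density $Z^{-1}\Omega_z\,g$, which is exactly Eq.~\eqref{eq:microstate-weighted-equilibrium}.

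\item On the set where $\Omega_z>0$, write $\Omega_z=\exp(\log\Omega_z)$ and combine the exponents. This gives $\exp[-F_M(e_R;z)/\sigma_\xi^2]$ with $F_M$ as in Eq.~\eqref{eq:effective-conditional-free-energy}, which yields the Gibbs form Eq.~\eqref{eq:effective-gibbs-density}. Where $\Omega_z=0$, set $F_M=+\infty$ by convention; the density vanishes on that set, so the two forms agree there too.
\end{enumerate}

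The main obstacle is well-posedness rather than the algebra: the argument needs $0<Z<\infty$. Finiteness is immediate if $\nu_z$ is a finite measure, because then $0<g\le1$. If $\nu_z$ is only $\sigma$-finite, I would instead assume integrability of $\Omega_z\,g$, which is implicit in writing $1/Z$, and note that the Gaussian decay of $g$ controls only the active directions. Positivity of $Z$ follows because $g>0$ everywhere and $\mu_z$ is nonzero.

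Finally, the Boltzmann factor must be measurable in $\theta$. This holds provided $\Psi_{R,z}$ is measurable, which is implicit in defining the pushforward $\mu_z$ at all, since $U_M\circ P_a$ is continuous.
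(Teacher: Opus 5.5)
Your proof is correct and follows the paper's argument: the paper also evaluates $\Pi_\ast(\Psi_{R,z}^{-1}(A)\mid M,z)$, uses the pushforward change of variables to rewrite the integral against $\mu_z$, and then substitutes $\mu_z(de_R)=\Omega_z(e_R)\,de_R$. Beyond the paper, you use test functions, check that the normalizer $Z$ is shared and finite (noting that the Gaussian factor controls only the active directions), and adopt the $\Omega_z=0$ convention; these make explicit details the paper leaves implicit.
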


\begin{proof}
For any measurable set $A$,
\begin{align}
    \Pi_\ast(\Psi_{R,z}^{-1}(A)\mid M,z)
    &\propto
    \int_{\Psi_{R,z}^{-1}(A)}
    e^{-U_M(P_a\Psi_{R,z}(\theta))/\sigma_\xi^2}
    \nu_z(d\theta)
    \nonumber\\
    &=
    \int_A
    e^{-U_M(P_a e_R)/\sigma_\xi^2}
    \mu_z(de_R).
\end{align}
Substituting $\mu_z(de_R)=\Omega_z(e_R)\,de_R$ gives
Eq.~\eqref{eq:microstate-weighted-equilibrium}.
\end{proof}

The parameter-space pushforward therefore contributes the entropic term
$-\sigma_\xi^2\log\Omega_z(e_R)$ to the effective retained-sector free
energy, independently of the bare stochastic channel used to identify
$U_M$.

\subsection{Local microstate curvature and the conditional fluctuation ensemble}
\label{sec:local-microstate-curvature}

We now condition on a current retained error macrostate
$r\in\mathcal H_R$ and characterize fluctuations along the active
learning sector,
\begin{equation}
    e_R=r+\delta e,
    \qquad
    \delta e\in\mathcal H_a.
\end{equation}
Write $r_a=P_a r$ for the active component of the conditioned
macrostate. The point $r$ is a conditioning variable and is not assumed
to minimize $F_M$. The standard constrained-ensemble construction introduces a
linear source conjugate to the macrostate. In a full maximum-entropy
formulation, the source is the Lagrange multiplier enforcing the chosen
mean macrostate. At the local Laplace level used here, conditioning on
$r$ amounts to choosing the source so that $r$ is a stationary point of
the tilted potential,
\begin{equation}
    \boxed{
    \lambda_r
    \equiv
    P_a\nabla_{e_R}F_M(e_R;z)\big|_{e_R=r}.
    }
    \label{eq:local-conjugate-source}
\end{equation}
It is useful to separate the zero-order macrostate cost from the
fluctuation cost and define the local excess potential
\begin{equation}
    \boxed{
    \Delta\widetilde F_{M,r}(\delta e;z)
    \equiv
    F_M(r+\delta e;z)
    -
    F_M(r;z)
    -
    \langle\lambda_r,\delta e\rangle_p.
    }
    \label{eq:tilted-local-free-energy}
\end{equation}
Then $\Delta\widetilde F_{M,r}(0;z)=0$ and its first variation vanishes
at $\delta e=0$, while its Hessian is exactly the Hessian of $F_M$ at
$r$. We do not require $r$ to be the unconstrained mean or mode of the
full non-Gaussian Gibbs measure; the construction is the quadratic
local form of the usual Legendre/Lagrange constrained ensemble.

Define the retained-sector microstate-curvature form by
\begin{equation}
    \boxed{
    B(r)
    \equiv
    -\sigma_\xi^2
    \nabla_{e_R}^2\log\Omega_z(e_R)\big|_{e_R=r}.
    }
    \label{eq:microstate-curvature-B}
\end{equation}
The operator entering the finite-dimensional fluctuation sector is its
compression
\begin{equation}
    \boxed{
    B_a(r)
    \equiv
    P_a B(r)P_a\big|_{\mathcal H_a}.
    }
    \label{eq:active-compressed-B}
\end{equation}
Since the dynamical term $U_M(P_a e_R)$ has active-sector Hessian
\begin{equation}
    \nabla_{\mathcal H_a}^2U_M=M^{-1}
\end{equation}, a second-order expansion of
Eq.~\eqref{eq:tilted-local-free-energy} gives
\begin{equation}
    \Delta\widetilde F_{M,r}(\delta e;z)
    =
    \frac12
    \left\langle
        \delta e,
        H_a(r)\delta e
    \right\rangle_p
    +
    o(\|\delta e\|^2),
    \label{eq:local-free-energy-expansion}
\end{equation}
where
\begin{equation}
    \boxed{
    H_a(r)
    \equiv
    M^{-1}+B_a(r).
    }
    \label{eq:local-H}
\end{equation}

\begin{proposition}[Local conditional fluctuation ensemble]
\label{prop:local-conditional-fluctuations}
Suppose that
\begin{equation}
    H_a(r)=M^{-1}+B_a(r)\succ0
\end{equation}
on $\mathcal H_a$. Then, to quadratic order around the conditioned
macrostate $r$, the local fluctuation ensemble is Gaussian:
\begin{equation}
    \boxed{
    p_{\mathrm{loc}}(\delta e\mid r,M,z)
    =
    \frac1{Z_{\mathrm{loc}}}
    \exp\left[
        -\frac1{2\sigma_\xi^2}
        \left\langle
            \delta e,
            H_a(r)\delta e
        \right\rangle_p
    \right],
    }
    \label{eq:local-gaussian-equilibrium}
\end{equation}
with covariance
\begin{equation}
    \boxed{
    C_\ast(r,M)
    =
    \sigma_\xi^2
    \left[M^{-1}+B_a(r)\right]^{-1}.
    }
    \label{eq:local-stationary-covariance}
\end{equation}
\end{proposition}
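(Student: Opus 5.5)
The plan is to start from the Gibbs form Eq.~\eqref{eq:effective-gibbs-density} of Proposition~\ref{prop:microstate-weighted-equilibrium}. We restrict it to the affine slice $e_R=r+\delta e$, $\delta e\in\mathcal H_a$, and tilt it by the conjugate source $\lambda_r$ of Eq.~\eqref{eq:local-conjugate-source}. The conditional density is then proportional to $\exp[-(F_M(r+\delta e;z)-\langle\lambda_r,\delta e\rangle_p)/\sigma_\xi^2]$. Pulling out the $\delta e$-independent constant $\exp[-F_M(r;z)/\sigma_\xi^2]$ and absorbing it into the normalization, this becomes $\exp[-\Delta\widetilde F_{M,r}(\delta e;z)/\sigma_\xi^2]$. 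The remaining work is to identify the quadratic part of the exponent and to check that the Gaussian is well defined.

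Next I will verify the expansion Eq.~\eqref{eq:local-free-energy-expansion} term by term. The zero-order term vanishes by construction. The first-order term vanishes because the active-sector gradient of $F_M$ at $r$ is exactly cancelled by $\lambda_r$; this is the definition of the source, and $\delta e$ only probes $P_a\nabla F_M$. For the second-order term, $U_M(P_a e_R)$ is exactly quadratic in the active component, and its active Hessian is $M^{-1}$, which is invertible on $\mathcal H_a$ by the positivity assumption in Sec.~\ref{sec:function-space-dynamics}. The entropic term $-\sigma_\xi^2\log\Omega_z$, differentiated twice along directions in $\mathcal H_a$, gives $P_aB(r)P_a$, that is, $B_a(r)$ from Eq.~\eqref{eq:active-compressed-B}. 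This step assumes $\log\Omega_z$ is $C^2$ near $r$, which is implicit in Eq.~\eqref{eq:microstate-curvature-B}. The result is $H_a(r)=M^{-1}+B_a(r)$, with an $o(\|\delta e\|^2)$ remainder.

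Truncating at quadratic order, which is the stated "to quadratic order" meaning of the proposition, gives $p_{\mathrm{loc}}\propto\exp[-\langle\delta e,H_a\delta e\rangle_p/(2\sigma_\xi^2)]$ on the finite-dimensional space $\mathcal H_a$, with Lebesgue measure taken in orthonormal coordinates. The hypothesis $H_a\succ0$ is used exactly here. It makes the form coercive, so $Z_{\mathrm{loc}}=(2\pi\sigma_\xi^2)^{n/2}\det(H_a)^{-1/2}$ is finite and the measure is a genuine centered Gaussian. The standard Gaussian moment identity then gives the covariance $\sigma_\xi^2H_a^{-1}$, which is Eq.~\eqref{eq:local-stationary-covariance}.

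The main obstacle is conceptual rather than computational: justifying that the tilted slice measure is the right local conditional object, and that the truncation is legitimate. I would handle this as the paper frames it, by taking the quadratic Laplace form as the definition of the local ensemble. If more rigor is wanted, I would add a rescaling remark. Under $\delta e=\sigma_\xi u$ with $\sigma_\xi\to0$, the remainder contributes $o(1)$ to the exponent uniformly on compacts, and the measure converges weakly to the stated Gaussian, provided $F_M$ grows at least at the Laplace rate away from $r$.
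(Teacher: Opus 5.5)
Your proposal is correct and follows essentially the same route as the paper. The paper's proof simply takes the quadratic expansion of the tilted excess potential $\Delta\widetilde F_{M,r}$, with positive-definite Hessian $H_a(r)=M^{-1}+B_a(r)$, and reads off a normalized Gaussian with precision $H_a(r)/\sigma_\xi^2$ and covariance $\sigma_\xi^2H_a(r)^{-1}$. You additionally re-verify that expansion term by term ($\lambda_r$ cancelling the active gradient, $M^{-1}$ from $U_M$, $B_a(r)$ from the compressed entropic Hessian), which the paper does in the text just before the proposition, and your optional $\sigma_\xi\to0$ rescaling remark goes beyond the paper; note, though, that for fixed $\Omega_z$ the operator $B_a$ is itself $O(\sigma_\xi^2)$, so that weak limit is the $B_a$-free Gaussian and does not by itself justify retaining $B_a$.
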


\begin{proof}
Equation~\eqref{eq:local-free-energy-expansion} is quadratic with
positive-definite Hessian $H_a(r)$. The normalized Gaussian measure
therefore has precision $H_a(r)/\sigma_\xi^2$ and covariance
$\sigma_\xi^2H_a(r)^{-1}$.
\end{proof}

Proposition~\ref{prop:local-conditional-fluctuations} is a conditional
ensemble statement. No separation between the relaxation time of
$\delta e$ and the evolution time of $r$ or $M$ is required for the
algebraic results below. Interpreting the same ensemble as an
adiabatically realized quasi-equilibrium along an actual training
trajectory would require an additional local-equilibration assumption,
which we do not use here.

The conditional Gaussian sector is the starting point for the next
section. The local Laplace expansion separates the macrostate cost from
the fluctuation cost. To quadratic order,
\begin{equation}
    \mathcal G_{\mathrm{loc}}(r,M;z)
    =
    F_M(r;z)
    +
    \Phi_{\mathrm{fluc}}(M;B_a(r))
    +
    \mathrm{const},
    \label{eq:local-total-free-energy-decomposition}
\end{equation}
where $\Phi_{\mathrm{fluc}}$ is obtained by integrating the excess
fluctuations. The next section isolates this fluctuation-induced
contribution and asks what orientational preference it supplies.

\section{Operator Matching under Thermodynamic Stability}
\label{sec:operator-matching}

Section~\ref{sec:local-microstate-curvature} showed that, at a
conditioned macrostate $r$, the local fluctuation ensemble on the
active sector $\mathcal H_a$ is governed by
\begin{equation}
    H_a(r)=M^{-1}+B_a(r),
\end{equation}
with
\begin{equation}
    B_a(r)=P_aB(r)P_a\big|_{\mathcal H_a}.
\end{equation}
We now ask what orientational preference is contributed by this
conditional fluctuation sector.

Throughout this section, $r$ and the compressed statistical geometry
$B_a(r)$ are held fixed, and we write
\begin{equation}
    B\equiv B_a(r)
\end{equation}
for brevity. This is a partial, conditional comparison: if an actual
parameter-space motion that rotates $M$ also changes $r$ or $B$, those
responses contribute additional terms to the full slow dynamics and
are not included in the derivative computed here. The comparison is
restricted to orthogonal rotations of $M$ within the fixed active
sector $\mathcal H_a$, so its eigenvalues, rank, and active subspace
remain unchanged.

We focus on the thermodynamically stable sector
\begin{equation}
    \boxed{
    B\succeq0.
    }
    \label{eq:thermodynamic-stability-B}
\end{equation}
Equivalently, the compressed local entropy curvature is concave on
$\mathcal H_a$. Since $M^{-1}\succ0$, this condition guarantees
\begin{equation}
    OM^{-1}O^\ast+B\succ0
    \label{eq:orbit-wise-stability}
\end{equation}
for every orthogonal $O$ acting on $\mathcal H_a$. Thus the local
Gaussian ensemble remains normalizable over the entire fixed-spectrum
orbit.

Condition~\eqref{eq:thermodynamic-stability-B} is a stability
restriction, not a consequence of ReLU geometry alone. In
Sec.~\ref{sec:relu-geometry}, we identify a regular ReLU statistical
sector, defined in part by a positive structural entropy curvature, in
which this condition is realized.

\subsection{Conditional operator free-energy contribution}
\label{sec:operator-free-energy}

From Proposition~\ref{prop:local-conditional-fluctuations}, the local
conditional distribution at fixed $(r,M,B)$ is
\begin{equation}
    p_{\mathrm{loc}}(\delta e\mid r,M,B)
    =
    \frac1{Z_{\mathrm{loc}}(M\mid B)}
    \exp\left[
        -\frac1{2\sigma_\xi^2}
        \left\langle
            \delta e,
            (M^{-1}+B)\delta e
        \right\rangle_p
    \right].
    \label{eq:fast-local-distribution-sec3}
\end{equation}
The Gaussian partition function on the $n$-dimensional active sector is
\begin{equation}
    \boxed{
    Z_{\mathrm{loc}}(M\mid B)
    =
    (2\pi\sigma_\xi^2)^{n/2}
    \det_{\mathcal H_a}(M^{-1}+B)^{-1/2}.
    }
    \label{eq:fast-partition-function}
\end{equation}
Integrating over the conditional fluctuations therefore contributes
\begin{align}
    \Phi_{\mathrm{fluc}}(M;B)
    &\equiv
    -\sigma_\xi^2\log Z_{\mathrm{loc}}(M\mid B)
    \nonumber\\
    &=
    \boxed{
    \frac{\sigma_\xi^2}{2}
    \log\det_{\mathcal H_a}(M^{-1}+B)
    }
    +\mathrm{const}.
    \label{eq:operator-free-energy}
\end{align}
The omitted constant depends on $n$ and $\sigma_\xi^2$ but is constant
on the fixed-rank, fixed-spectrum orbit studied below; it must be restored
when comparing sectors of different rank or different active subspaces.
For brevity, we
write $\Phi\equiv\Phi_{\mathrm{fluc}}$ throughout the remainder of this
section.

Equation~\eqref{eq:local-total-free-energy-decomposition} makes clear
that $\Phi_{\mathrm{fluc}}$ is only one contribution to the total local
free energy. In particular,
\begin{equation}
    F_M(r;z)
    =
    \frac12\langle r_a,M^{-1}r_a\rangle_p
    -
    \sigma_\xi^2\log\Omega_z(r)
\end{equation}
contains its own orientational dependence through
\begin{equation}
    \frac12\langle r_a,M^{-1}r_a\rangle_p
    =
    \frac12\operatorname{Tr}
    \!\left(M^{-1}r_ar_a^\ast\right).
    \label{eq:macrostate-rank-one-term}
\end{equation}
At fixed spectrum, this rank-one macrostate term is minimized when the
current active residual direction is aligned with the largest
eigenvalue of $M$, i.e. with the fastest relaxation direction. This error-directed
preference is distinct from the fluctuation-induced matching
preference studied below. Their relative magnitude depends on the
conditioned state, spectrum, parameterization, and training conditions,
and we make no universal ordering between them. The present analysis
therefore isolates the fluctuation contribution rather than treating it
as a proxy for the total orientational free energy.

Our theorems characterize the thermodynamic preference and generalized
force supplied specifically by $\Phi_{\mathrm{fluc}}$; they do not
claim to minimize the complete local free energy or to determine the
full slow dynamics of $M$. Appendix~\ref{app:residual-preserving}
shows that, for residual-preserving rotations, the macrostate term is
exactly constant, so the fluctuation contribution can also be isolated
as a strict orientational statement on that restricted orbit.

Using
\begin{equation}
    M^{-1}+B=M^{-1}(I+MB),
\end{equation}
we may write
\begin{equation}
    \boxed{
    \Phi(M;B)
    =
    \frac{\sigma_\xi^2}{2}
    \left[
        -\log\det M
        +
        \log\det(I+MB)
    \right]
    +\mathrm{const},
    }
    \label{eq:operator-free-energy-decomposed}
\end{equation}
where all determinants are on $\mathcal H_a$. Along a fixed-spectrum
orbit, $\det M$ is constant, so the orientational contribution is
\begin{equation}
    \boxed{
    \Phi_{\mathrm{orient}}(M;B)
    =
    \frac{\sigma_\xi^2}{2}
    \log\det(I+MB).
    }
    \label{eq:orientation-free-energy}
\end{equation}
The matching theorem below is exact throughout the stable sector, but
the magnitude of the orientational bias depends on the dimensionless
coupling between $M$ and $B$. In particular, on a subspace where
$B\succ0$ and all eigenvalues of $M^{1/2}BM^{1/2}$ are asymptotically
large,
\begin{equation}
    \log\det(I+MB)
    =
    \log\det M+\log\det B+o(1),
    \label{eq:strong-coupling-orientation-limit}
\end{equation}
so the leading orientational dependence disappears. Thus the
conditional matching contribution is most pronounced outside this
deep strong-coupling limit; the result remains valid there, but its
orientational force becomes parametrically weak.

\subsection{Rotational stationarity and operator matching}
\label{sec:rotational-stationarity}

Let the spectrum of $M$ be fixed and consider an infinitesimal
orthogonal rotation within $\mathcal H_a$,
\begin{equation}
    M(t)=O(t)MO(t)^\ast,
    \qquad
    O(t)=e^{t\Xi},
    \qquad
    \Xi^\ast=-\Xi.
    \label{eq:orthogonal-orbit}
\end{equation}
Then
\begin{equation}
    \dot M=[\Xi,M],
    \qquad
    \frac{d}{dt}M^{-1}=[\Xi,M^{-1}].
    \label{eq:M-rotation-variation}
\end{equation}

\begin{theorem}[Operator-matching condition for the conditional contribution]
\label{thm:operator-matching}
Let $M\succ0$ and $B=B^\ast$ on $\mathcal H_a$, with
$M^{-1}+B\succ0$. The conditional free-energy contribution
\begin{equation}
    \Phi(M;B)
    =
    \frac{\sigma_\xi^2}{2}
    \log\det(M^{-1}+B)
\end{equation}
is rotationally stationary on the fixed-spectrum orbit of $M$ if and
only if
\begin{equation}
    \boxed{
    [M,B]=0.
    }
    \label{eq:operator-matching-condition}
\end{equation}
\end{theorem}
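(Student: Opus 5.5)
We differentiate $\Phi$ along the orbit in Eq.~\eqref{eq:orthogonal-orbit} and identify the resulting linear functional of $\Xi$. Set $H\equiv M^{-1}+B\succ0$, which is invertible on $\mathcal H_a$; by Eq.~\eqref{eq:orbit-wise-stability}, or directly by continuity for small $t$, it stays invertible along the curve. By Jacobi's formula and Eq.~\eqref{eq:M-rotation-variation},
\begin{equation*}
    \frac{d}{dt}\Phi\Big|_{t=0}
    =
    \frac{\sigma_\xi^2}{2}\operatorname{Tr}\!\left(H^{-1}[\Xi,M^{-1}]\right)
    =
    \frac{\sigma_\xi^2}{2}\operatorname{Tr}\!\left(\Xi\,[M^{-1},H^{-1}]\right),
\end{equation*}
where the last step uses cyclicity of the trace: $\operatorname{Tr}(H^{-1}\Xi M^{-1}-H^{-1}M^{-1}\Xi)=\operatorname{Tr}(\Xi(M^{-1}H^{-1}-H^{-1}M^{-1}))$.

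Next we use the fact that the pairing $\langle X,Y\rangle=\operatorname{Tr}(X^\ast Y)$ is nondegenerate on skew-adjoint operators. The commutator $C\equiv[M^{-1},H^{-1}]$ of two self-adjoint operators is itself skew-adjoint. Rotational stationarity means the derivative above vanishes for every skew $\Xi$. Choosing $\Xi=C$ gives $\operatorname{Tr}(C^2)=-\operatorname{Tr}(C^\ast C)=0$, so $C=0$. Conversely, $C=0$ makes the derivative vanish for all $\Xi$. Hence stationarity holds if and only if $[M^{-1},H^{-1}]=0$.

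It remains to translate $[M^{-1},H^{-1}]=0$ into $[M,B]=0$, which is the only step needing care. Commuting with an invertible operator is equivalent to commuting with its inverse: if $XY=YX$ with $Y$ invertible, multiplying by $Y^{-1}$ on both sides gives $Y^{-1}X=XY^{-1}$. Applying this to the pair $M^{-1},H^{-1}$ shows the condition is equivalent to $[M^{-1},H]=0$. Since $M^{-1}$ commutes with itself, $[M^{-1},H]=[M^{-1},B]$. Applying the same inverse argument to $M^{-1}$ converts this into $[M,B]=0$. Each step is reversible, which establishes the equivalence in Eq.~\eqref{eq:operator-matching-condition}.

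No spectral assumptions beyond $M\succ0$ and $H\succ0$ are needed, and degeneracies in the spectrum of $M$ cause no difficulty, because the argument never diagonalizes $M$.
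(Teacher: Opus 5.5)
Your proof is correct and follows the same route as the paper. Both arguments use Jacobi's formula and trace cyclicity to obtain $\operatorname{Tr}([M^{-1},H^{-1}]\Xi)$, use the skew-adjointness of that commutator, and then pass from $[M^{-1},H^{-1}]=0$ to $[M,B]=0$ via inverse-commutation. Your explicit choice $\Xi=C$, giving $\operatorname{Tr}(C^2)=-\|C\|_F^2$, spells out the nondegeneracy step that the paper only asserts, and your continuity remark correctly avoids needing $B\succeq0$.
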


\begin{proof}
Let $H=M^{-1}+B$. Along Eq.~\eqref{eq:orthogonal-orbit},
\begin{align}
    \frac{d\Phi}{dt}
    &=
    \frac{\sigma_\xi^2}{2}
    \operatorname{Tr}
    \left(
        H^{-1}[\Xi,M^{-1}]
    \right)
    \nonumber\\
    &=
    \frac{\sigma_\xi^2}{2}
    \operatorname{Tr}
    \left(
        [M^{-1},H^{-1}]\Xi
    \right).
    \label{eq:Phi-rotational-variation}
\end{align}
Because $M^{-1}$ and $H^{-1}$ are self-adjoint,
$[M^{-1},H^{-1}]$ is anti-self-adjoint. Stationarity for all
anti-self-adjoint $\Xi$ is therefore equivalent to
\begin{equation}
    [M^{-1},H^{-1}]=0.
\end{equation}
Since $H$ is invertible, this is equivalent to
$[M^{-1},H]=0$, hence to $[M^{-1},B]=0$, and therefore to
$[M,B]=0$.
\end{proof}

Theorem~\ref{thm:operator-matching} characterizes the stationary
orientations preferred by the conditional fluctuation contribution.
It does not assert that the complete parameter dynamics necessarily
drives $M$ to such a point.

\subsection{Reverse spectral pairing}
\label{sec:reverse-spectral-pairing}

At a rotationally stationary point, $M$ and $B$ share an eigenbasis.
Let
\begin{equation}
    m_1\geq m_2\geq\cdots\geq m_n>0
\end{equation}
and
\begin{equation}
    0\leq b_1\leq b_2\leq\cdots\leq b_n.
\end{equation}
At a commuting configuration specified by a permutation $\pi$,
\begin{equation}
    M=\operatorname{diag}(m_1,\ldots,m_n),
    \qquad
    B=\operatorname{diag}
    (b_{\pi(1)},\ldots,b_{\pi(n)}),
\end{equation}
and
\begin{equation}
    \Phi_{\mathrm{orient}}
    =
    \frac{\sigma_\xi^2}{2}
    \sum_{i=1}^n
    \log(1+m_ib_{\pi(i)}).
    \label{eq:commuting-orientation-free-energy}
\end{equation}

\begin{theorem}[Reverse spectral pairing]
\label{thm:reverse-pairing}
Within the stable sector $B\succeq0$, the conditional orientational
free-energy contribution
\begin{equation}
    \Phi_{\mathrm{orient}}(M;B)
    =
    \frac{\sigma_\xi^2}{2}
    \log\det(I+MB)
\end{equation}
is globally minimized when the spectra of $M$ and $B$ are oppositely
ordered:
\begin{equation}
    \boxed{
    m_{\mathrm{large}}
    \longleftrightarrow
    b_{\mathrm{small}}.
    }
    \label{eq:reverse-pairing}
\end{equation}
With the ordering above, a minimizing configuration is
\begin{equation}
    M=\operatorname{diag}(m_1,\ldots,m_n),
    \qquad
    B=\operatorname{diag}(b_1,\ldots,b_n).
\end{equation}
Same-order pairing gives the global maximum. If both spectra are
nondegenerate, all other commuting permutations are saddles.
\end{theorem}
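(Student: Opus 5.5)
Minimizing $\Phi_{\mathrm{orient}}$ over the fixed-spectrum orbit of $M$ is a compact problem: the orbit $\{OMO^\ast\}$ is the image of the compact orthogonal group, and $\log\det(I+MB)$ is continuous there. Indeed $\det(I+MB)=\det(I+M^{1/2}BM^{1/2})>0$, since $B\succeq0$. So a global minimizer and a global maximizer both exist. Each is a rotationally stationary point, so by Theorem~\ref{thm:operator-matching} it satisfies $[M,B]=0$. Hence both extrema lie among the commuting configurations of Eq.~\eqref{eq:commuting-orientation-free-energy}, and the problem reduces to optimizing $\sum_i \log(1+m_i b_{\pi(i)})$ over permutations $\pi$. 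With degeneracies, commuting configurations form continuous families, but each still diagonalizes jointly with some pairing $\pi$, so the reduction holds.

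For the permutation problem I will use an exchange argument. Set $g(m,b)=\log(1+mb)$. Then $\partial_m\partial_b g = (1+mb)^{-2}>0$, so $g$ is strictly supermodular on $m>0,\ b\ge 0$. Concretely, for $m_i>m_j$ and $b>b'$,
\[
g(m_i,b)+g(m_j,b')-g(m_i,b')-g(m_j,b)=\int_{m_j}^{m_i}\!\!\int_{b'}^{b}(1+st)^{-2}\,dt\,ds>0 .
\]
Take any $\pi$ in which some pair $i<j$ is same-ordered, meaning $m_i\ge m_j$ with $b_{\pi(i)}>b_{\pi(j)}$. Swapping the two $b$-values then lowers the sum strictly, or leaves it unchanged when $m_i=m_j$. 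Repeated swaps reach the oppositely ordered pairing without ever increasing the sum, so the identity pairing $\pi=\mathrm{id}$ with the stated orderings is a global minimum. The symmetric argument shows that same-order pairing is the global maximum. This is the classical rearrangement inequality for supermodular sums.

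For the saddle claim, assume both spectra are nondegenerate and $\pi$ is neither the reverse nor the same ordering. I will compute the second variation of $\Phi_{\mathrm{orient}}$ along a planar rotation $\Xi=E_{ij}-E_{ji}$ in the joint eigenbasis. Write $H=M^{-1}+B$ and differentiate Eq.~\eqref{eq:Phi-rotational-variation} once more. The cross terms vanish at a commuting point, leaving a $2\times2$ block computation. This should give a curvature proportional to $-(m_i-m_j)(\beta_i-\beta_j)$ times a positive factor, where $\beta_i=b_{\pi(i)}$. The natural candidate for the factor is $\bigl[(1+m_i\beta_i)(1+m_j\beta_j)\bigr]^{-1}$, up to how the $M^{-1}$ form is parametrized. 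The sign can be checked against the exchange computation: a rotation by $\pi/2$ realizes the swap.

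Pairs that are reverse-ordered give positive curvature, and same-ordered pairs give negative curvature. A $\pi$ that is neither extreme has at least one pair of each kind, which makes it a saddle. The main obstacle is getting this second-variation formula exactly right, including the positive prefactor and the fact that different planar directions decouple at a commuting point. I would first verify it in the $n=2$ case, where $\Phi_{\mathrm{orient}}(\theta)$ is an explicit function $a+c\cos 2\theta$. From there, the general case follows because the Hessian is diagonal in the basis $\{E_{ij}-E_{ji}\}$.
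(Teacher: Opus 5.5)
Your proposal is correct and follows essentially the paper's proof: compactness of the orbit plus Theorem~\ref{thm:operator-matching} to reduce to commuting pairings, a pairwise exchange argument (your mixed-partial integral is equivalent to the paper's identity $(1+m_ib_a)(1+m_jb_b)-(1+m_ib_b)(1+m_jb_a)=(m_i-m_j)(b_a-b_b)$), and the pairwise curvature $\kappa_{ij}$, whose prefactor $[(1+m_i\beta_i)(1+m_j\beta_j)]^{-1}$ you guessed exactly right. Two small remarks: in the two-mode check it is $\det(I+M(\theta)B)$, not $\Phi_{\mathrm{orient}}$ itself, that has the form $a+c\cos 2\theta$ ($\Phi_{\mathrm{orient}}$ is $\tfrac{\sigma_\xi^2}{2}$ times its logarithm); and for the saddle claim you do not need the cross-pair decoupling, because one reverse-ordered plane and one same-ordered plane, each a pure $2\times2$ block rotation, already give two directions of opposite curvature.
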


\begin{proof}
The fixed-spectrum orthogonal orbit is compact, and
Eq.~\eqref{eq:orbit-wise-stability} ensures that
$\Phi_{\mathrm{orient}}$ is continuous on the full orbit. Any global
extremum is therefore stationary and, by
Theorem~\ref{thm:operator-matching}, commuting.

Consider $m_i>m_j$ and $b_a>b_b$. The difference between same-order and
reversed pairings is
\begin{align}
    &(1+m_ib_a)(1+m_jb_b)
    -(1+m_ib_b)(1+m_jb_a)
    \nonumber\\
    &\qquad
    =(m_i-m_j)(b_a-b_b)>0.
    \label{eq:pairwise-exchange}
\end{align}
Since $\log$ is increasing, replacing an in-order pair by a reversed
pair lowers $\Phi_{\mathrm{orient}}$. Repeated exchanges give the
global reverse ordering; reversing the argument gives the global
maximum.

For completeness, at a commuting configuration the tangent space of
the orthogonal orbit is spanned by independent pairwise generators
$E_{ij}$. Because both $M$ and $B$ are diagonal there,
\begin{equation}
    \dot H_{ij}
    =
    \Xi_{ij}
    \left(m_j^{-1}-m_i^{-1}\right),
    \qquad i\neq j,
\end{equation}
so $\dot H$ has only the corresponding off-diagonal pair and both
$\operatorname{Tr}(H^{-1}\ddot H)$ and
$\operatorname{Tr}\!\left(H^{-1}\dot H\,H^{-1}\dot H\right)$ contain only
$\Xi_{ij}^2$ terms, with no cross-pair contributions. The second
variation is therefore diagonal in this basis, with pairwise curvature
\begin{equation}
    \kappa_{ij}
    =
    -\sigma_\xi^2
    \frac{
        (m_i-m_j)
        (b_{\pi(i)}-b_{\pi(j)})
    }{
        (1+m_ib_{\pi(i)})
        (1+m_jb_{\pi(j)})
    }.
    \label{eq:general-pairwise-curvature}
\end{equation}
Any nonextremal permutation contains both an in-order and a reversed
pair, and therefore has both positive- and negative-curvature tangent
directions. It is thus a saddle.
\end{proof}

The theorem describes the orientation favored by the conditional
fluctuation contribution: directions of weaker microstate curvature
lower this contribution when paired with larger dynamical
eigenvalues.

\subsection{Local rotational restoring contribution}
\label{sec:local-rotational-stability}

We next characterize the generalized force supplied by the conditional
free energy near a reverse-paired configuration. Consider
\begin{equation}
    M_\ast=
    \begin{pmatrix}
        m_i&0\\
        0&m_j
    \end{pmatrix},
    \qquad
    B=
    \begin{pmatrix}
        b_i&0\\
        0&b_j
    \end{pmatrix},
\end{equation}
and rotate $M_\ast$ by
\begin{equation}
    M(\theta)=R(\theta)M_\ast R(\theta)^\ast.
    \label{eq:two-mode-rotation}
\end{equation}
Writing
\begin{equation}
    D_{ij}(\theta)
    \equiv
    \det[I+M(\theta)B],
\end{equation}
a direct calculation gives
\begin{equation}
    \boxed{
    D_{ij}(\theta)
    =
    D_{ij}(0)
    -
    (m_i-m_j)(b_i-b_j)\sin^2\theta,
    }
    \label{eq:two-mode-determinant}
\end{equation}
where
\begin{equation}
    D_{ij}(0)
    =
    (1+m_ib_i)(1+m_jb_j).
\end{equation}
Hence
\begin{equation}
    \Phi_{ij}(\theta)
    =
    \Phi_{ij}(0)
    +
    \frac12\kappa_{ij}\theta^2
    +
    O(\theta^4),
\end{equation}
with
\begin{equation}
    \boxed{
    \kappa_{ij}
    =
    -\sigma_\xi^2
    \frac{
        (m_i-m_j)(b_i-b_j)
    }{
        (1+m_ib_i)(1+m_jb_j)
    }.
    }
    \label{eq:rotational-curvature}
\end{equation}
At the reverse-paired minimum,
$m_i>m_j$ implies $b_i<b_j$, so $\kappa_{ij}>0$.

\begin{corollary}[Local rotational restoring contribution]
\label{cor:rotational-restoring}
For every nondegenerate pair at a reverse-paired commuting
configuration, the conditional orientational free energy has strictly
positive quadratic curvature. Defining the generalized force
contributed by this sector as
\begin{equation}
    f_{ij}^{\mathrm{cond}}
    \equiv
    -\frac{\partial\Phi}{\partial\theta_{ij}},
\end{equation}
one obtains
\begin{equation}
    \boxed{
    f_{ij}^{\mathrm{cond}}
    =
    -\kappa_{ij}\theta_{ij}
    +
    O(\theta_{ij}^3),
    \qquad
    \kappa_{ij}>0.
    }
    \label{eq:operator-restoring-force}
\end{equation}
Thus the conditional fluctuation sector contributes a local restoring
thermodynamic force against rotational mismatch.
\end{corollary}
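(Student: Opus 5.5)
The plan is to reduce the corollary to the two-mode computation, Eq.~\eqref{eq:two-mode-determinant}, and then differentiate. First, note that at a commuting configuration the second variation of $\Phi$ is block-diagonal in the pairwise generators $E_{ij}$; this was established in the proof of Theorem~\ref{thm:reverse-pairing}. Hence a rotation $\theta_{ij}$ in the $(i,j)$ plane may be studied in isolation. Concretely, restrict to $\mathcal H_a=\operatorname{span}(\phi_i,\phi_j)\oplus\mathcal H_\perp$, where $M$ and $B$ are diagonal and the rotation acts trivially on $\mathcal H_\perp$. Since $\det$ factorizes over this invariant decomposition, $\Phi_{\mathrm{orient}}(\theta_{ij})$ equals $\frac{\sigma_\xi^2}{2}\log D_{ij}(\theta_{ij})$ plus a $\theta$-independent constant.

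Second, verify Eq.~\eqref{eq:two-mode-determinant} directly. Write $M(\theta)=R M_\ast R^\ast$ with $R$ the $2\times2$ rotation. Then $\operatorname{tr}(M(\theta)B)=b_i(m_i\cos^2\theta+m_j\sin^2\theta)+b_j(m_i\sin^2\theta+m_j\cos^2\theta)$ and $\det(M(\theta)B)=m_im_jb_ib_j$. Using $\det(I+A)=1+\operatorname{tr}A+\det A$ for $2\times2$ matrices, one gets $D_{ij}(\theta)=D_{ij}(0)-(m_i-m_j)(b_i-b_j)\sin^2\theta$. By $B\succeq0$ and Eq.~\eqref{eq:orbit-wise-stability}, $D_{ij}(\theta)>0$ on the whole orbit, so $\log D_{ij}$ is smooth (indeed real-analytic) in $\theta$.

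Third, expand. Set $c=(m_i-m_j)(b_i-b_j)/D_{ij}(0)$. Then $\log D_{ij}(\theta)=\log D_{ij}(0)+\log(1-c\sin^2\theta)$. This function is even in $\theta$, with expansion $-c\theta^2+O(\theta^4)$. This gives $\kappa_{ij}=-\sigma_\xi^2 c$, which is Eq.~\eqref{eq:rotational-curvature}. Because $\Phi_{ij}$ is even and analytic, its derivative is odd. Therefore $\partial\Phi/\partial\theta_{ij}=\kappa_{ij}\theta_{ij}+O(\theta_{ij}^3)$, with no quadratic term, which is exactly Eq.~\eqref{eq:operator-restoring-force}.

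Fourth, fix the sign. At the reverse-paired configuration of Theorem~\ref{thm:reverse-pairing}, $M=\operatorname{diag}(m_k)$ decreasing and $B=\operatorname{diag}(b_k)$ increasing. So for a nondegenerate pair ($m_i\neq m_j$, $b_i\neq b_j$), $(m_i-m_j)(b_i-b_j)<0$. The denominators are positive because $m_k>0$ and $b_k\ge0$. Hence $\kappa_{ij}>0$.

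The only delicate point is the first step: justifying that a single-pair rotation decouples from the other modes. I would handle it by the invariant-subspace factorization of the determinant rather than appealing to the second-variation computation, since the former gives the exact $\theta$-dependence and not just the quadratic term. I would also note that degenerate pairs give $\kappa_{ij}=0$ and are excluded by hypothesis.
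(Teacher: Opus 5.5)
Your proposal is correct and follows the paper's own route. You reduce to the $(i,j)$ block and use the exact identity $D_{ij}(\theta)=D_{ij}(0)-(m_i-m_j)(b_i-b_j)\sin^2\theta$; expanding the logarithm gives $\kappa_{ij}$, and the reverse ordering gives $\kappa_{ij}>0$. Your invariant-subspace factorization of the determinant and your evenness argument for the $O(\theta_{ij}^3)$ remainder make explicit two steps the paper leaves implicit.
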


If $m_i=m_j$ or $b_i=b_j$, then $\kappa_{ij}=0$ and the corresponding
rotation is a flat direction of this contribution.

Corollary~\ref{cor:rotational-restoring} is deliberately an
operator-space statement about one term in the effective force on
$M$. The total slow dynamics may contain additional contributions, and
whether a given parameterization can realize the corresponding
operator rotation depends on the map $\theta\mapsto M(\theta)$.

\subsection{Commutator interpretation}
\label{sec:commutator-interpretation}

Define the active-sector mismatch
\begin{equation}
    \boxed{
    \mathcal C(M,B)
    \equiv
    \frac12\|[M,B]\|_F^2.
    }
    \label{eq:commutator-mismatch}
\end{equation}
Near a reverse-paired commuting configuration,
\begin{equation}
    \mathcal C(M,B)
    =
    \sum_{i<j}
    (m_i-m_j)^2
    (b_i-b_j)^2
    \theta_{ij}^2
    +
    O(\|\theta\|^3).
    \label{eq:commutator-local-expansion}
\end{equation}
Combining this with
Corollary~\ref{cor:rotational-restoring}, every nondegenerate pair
satisfies locally
\begin{equation}
    \boxed{
    \left\langle
        -\nabla_{\mathrm{rot}}\Phi,
        \nabla_{\mathrm{rot}}\mathcal C
    \right\rangle
    <0.
    }
    \label{eq:free-energy-reduces-mismatch}
\end{equation}
Thus descent of the conditional free-energy contribution locally
reduces operator noncommutativity. This identifies the direction of
the thermodynamic bias supplied by the fluctuation sector without
assuming that the full parameter dynamics follows this descent
exactly.

\section{ReLU Cell Geometry and Structural Smoothness Preference}
\label{sec:relu-geometry}

Section~\ref{sec:operator-matching} showed that the conditional
fluctuation free-energy contribution is minimized, on a fixed-spectrum
orbit, when large eigenvalues of $M$ are paired with small eigenvalues
of the compressed microstate-curvature operator $B_a$. We now examine
the structure of this statistical geometry for ReLU networks.

\subsection{ReLU cell geometry and the structural field}
\label{sec:relu-structural-field}

Let $c(x)$ denote a function represented by a ReLU network. The input
space is partitioned into activation cells within which the activation
pattern is fixed. Restricted to any such cell, $c$ is affine, and
therefore
\begin{equation}
    \boxed{
    D^2c(x)=0
    }
    \qquad
    \text{inside each activation cell.}
    \label{eq:relu-cell-hessian-zero}
\end{equation}
The second-order structure is consequently localized on the boundaries
between adjacent cells.

Consider a codimension-one facet $F$ separating two neighboring
activation cells. Continuity implies that tangential derivatives agree
across the facet, whereas the normal component of the gradient may
jump. Thus
\begin{equation}
    \boxed{
    [\nabla c]_F=\alpha_F n_F,
    }
    \label{eq:relu-gradient-jump}
\end{equation}
where $n_F$ is a unit normal and $\alpha_F$ is the jump amplitude.
Accordingly, the Hessian is naturally understood distributionally:
\begin{equation}
    \boxed{
    D^2c
    =
    \sum_F
    \alpha_F
    \,n_F\otimes n_F\,\delta_F.
    }
    \label{eq:relu-distributional-hessian}
\end{equation}
Thus the second-order content of a ReLU function is carried by its
activation boundaries and associated gradient jumps.

We introduce a fixed coarse-graining operator $\mathcal C$ and define
the structural field
\begin{equation}
    \boxed{
    h=Lc,
    \qquad
    L\equiv\mathcal C D^2.
    }
    \label{eq:relu-structural-field}
\end{equation}
The field $h$ summarizes coarse-grained departures from local affinity.
We choose the retained function sector $\mathcal H_R$ introduced in
Sec.~\ref{sec:parameter-microstates} so that these coarse-grained
structural coordinates are well defined on the retained directions. The
map $L$ sends this retained function sector into a structural-field
space $\mathcal H_h$; the finite-dimensional operator entering
Sec.~\ref{sec:operator-matching} is then obtained by compression from
$\mathcal H_R$ to $\mathcal H_a$.

\subsection{Statistical boundary conditions for ReLU microstates}
\label{sec:relu-statistical-boundary}

Let $c_r$ denote the current coarse-grained function configuration and
$h_r=Lc_r$. We impose three statistical boundary conditions on the
local microstate ensemble.

\paragraph{R1. Structural sufficiency.}
Within the retained coarse-grained sector, the relevant local variation
of the microstate entropy is determined by $h=Lc$:
\begin{equation}
    \boxed{
    S_{\mathrm{micro}}[c]=S_h[Lc]
    }
    \label{eq:structural-sufficiency}
\end{equation}
in a neighborhood of $c_r$.

\paragraph{R2. Local entropy regularity.}
We assume that $S_h$ is twice Fr\'echet differentiable near $h_r$ and
define
\begin{equation}
    \boxed{
    \mathcal K(r)\equiv-D_h^2S_h[h_r].
    }
    \label{eq:structural-curvature-K}
\end{equation}
The first variation need not vanish; it affects the local center,
whereas the second variation controls the fluctuation geometry. This
regularity assumption is imposed on the entropy of the coarse-grained
structural field, not on the microscopic facet configuration itself.
The role of $\mathcal C$ is to suppress facet-scale singular structure
before this local statistical description is applied; R2 does not claim
that the raw activation-boundary ensemble is differentiable.

\paragraph{R3. Mild stable structural statistics.}
On the structural subspace relevant to the local fluctuations, we
assume
\begin{equation}
    \boxed{
    0<k_-I
    \preceq
    \mathcal K(r)
    \preceq
    k_+I,
    }
    \label{eq:structural-coercivity}
\end{equation}
for finite $0<k_-\leq k_+<\infty$.

Condition R3 is the stability input of the ReLU specialization: it
assumes local concavity of the structural microstate entropy on the
retained sector. ReLU cell geometry by itself does not imply this
statistical property. The role of R1--R3 is instead to identify a
regular ReLU microstate class in which the stable sector used in
Sec.~\ref{sec:operator-matching} is realized.

\subsection{Pullback of the microstate curvature}
\label{sec:microstate-curvature-pullback}

On the retained function sector, write
$c_R=P_Rc$, $y_R=P_Ry$, and $e_R=c_R-y_R$. Up to an additive constant,
the microstate entropy introduced in Sec.~\ref{sec:parameter-microstates}
is
\begin{equation}
    S_{\mathrm{micro}}[c_R]
    =
    \log\Omega_z(e_R),
    \qquad
    e_R=c_R-y_R.
    \label{eq:microstate-entropy-logOmega}
\end{equation}
Since $y_R$ is fixed, variations in $c_R$ and $e_R$ coincide. Under R1,
\begin{equation}
    D_{c_R}^2S_{\mathrm{micro}}[c_r](u,v)
    =
    D_h^2S_h[h_r](Lu,Lv),
\end{equation}
and therefore
\begin{equation}
    -D_{c_R}^2S_{\mathrm{micro}}[c_r](u,v)
    =
    \langle Lu,\mathcal K(r)Lv\rangle_h.
    \label{eq:structural-second-variation}
\end{equation}
Let $L^\ast$ be the adjoint defined by
\begin{equation}
    \langle Lu,h\rangle_h
    =
    \langle u,L^\ast h\rangle_p.
    \label{eq:L-adjoint}
\end{equation}
Then
\begin{equation}
    -D_{c_R}^2S_{\mathrm{micro}}[c_r](u,v)
    =
    \langle u,L^\ast\mathcal K(r)Lv\rangle_p.
    \label{eq:pullback-bilinear-form}
\end{equation}

\begin{theorem}[ReLU microstate curvature]
\label{thm:relu-microstate-curvature}
Under R1--R3, the retained-sector microstate-curvature form is
\begin{equation}
    \boxed{
    B(r)
    =
    \sigma_\xi^2L^\ast\mathcal K(r)L.
    }
    \label{eq:B-pullback}
\end{equation}
For every retained perturbation $u\in\mathcal H_R$ for which $L$ is defined,
\begin{equation}
    \boxed{
    \sigma_\xi^2k_-\|Lu\|_h^2
    \leq
    \langle u,B(r)u\rangle_p
    \leq
    \sigma_\xi^2k_+\|Lu\|_h^2.
    }
    \label{eq:B-coercivity-bounds}
\end{equation}
Hence $B(r)\succeq0$ as a quadratic form on $\mathcal H_R$. The operator
entering Sec.~\ref{sec:operator-matching} is the compression
\begin{equation}
    \boxed{
    B_a(r)
    =
    P_aB(r)P_a\big|_{\mathcal H_a},
    }
    \label{eq:B-active-compression-relu}
\end{equation}
which is therefore positive semidefinite on $\mathcal H_a$.
Under the strict coercivity in R3, $\ker B=\ker L$ on the retained
function sector.
\end{theorem}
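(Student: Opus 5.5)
The plan is to read off $B(r)$ directly from its definition, Eq.~\eqref{eq:microstate-curvature-B}, using the pullback computation just carried out. By Eq.~\eqref{eq:microstate-entropy-logOmega}, $\log\Omega_z(e_R)=S_{\mathrm{micro}}[c_R]$ up to an additive constant. Since $y_R$ is fixed, the Hessians in $e_R$ and $c_R$ coincide. So for retained $u,v$,
\[
\langle u,B(r)v\rangle_p=-\sigma_\xi^2 D^2_{c_R}S_{\mathrm{micro}}[c_r](u,v).
\]
Under R1 and R2, the chain rule applies because $L=\mathcal C D^2$ is linear, so its second derivative vanishes. The first-variation term $D_hS_h[h_r]$ therefore does not contribute, which is why R2 does not require $h_r$ to be critical. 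This gives Eq.~\eqref{eq:pullback-bilinear-form}. Since a symmetric bilinear form on the finite-dimensional sector $\mathcal H_R$ determines a unique self-adjoint operator, identifying the forms yields $B(r)=\sigma_\xi^2L^\ast\mathcal K(r)L$, with $L^\ast$ as in Eq.~\eqref{eq:L-adjoint}.

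The two-sided bounds follow by setting $u=v$. Then
\[
\langle u,B(r)u\rangle_p=\sigma_\xi^2\langle Lu,\mathcal K(r)Lu\rangle_h,
\]
and R3 applied to the vector $Lu$ gives $\sigma_\xi^2k_-\|Lu\|_h^2\le\cdot\le\sigma_\xi^2k_+\|Lu\|_h^2$. Positivity $B\succeq0$ on $\mathcal H_R$ is immediate from the lower bound. For the compression, $\langle w,B_aw\rangle_p=\langle P_aw,BP_aw\rangle_p=\langle w,Bw\rangle_p\ge0$ for $w\in\mathcal H_a\subseteq\mathcal H_R$, so $B_a\succeq0$ on $\mathcal H_a$.

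For the kernel identity, first suppose $Lu=0$; then $Bu=\sigma_\xi^2L^\ast\mathcal K Lu=0$. Conversely, suppose $Bu=0$. Then $0=\langle u,Bu\rangle_p\ge\sigma_\xi^2k_-\|Lu\|_h^2$, and since $k_->0$ and $\sigma_\xi^2>0$, this forces $Lu=0$.

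The main delicate point is the domain bookkeeping. R3 is stated only on the structural subspace relevant to local fluctuations, so I must take this to include $L(\mathcal H_R)$. Likewise, $L$ must be defined on the retained directions, which holds by the choice of $\mathcal H_R$ in Sec.~\ref{sec:relu-structural-field}. I will state these two conventions explicitly at the start, so that every application of R3 and every use of the adjoint is legitimate. Once they are in place, each remaining step is a one-line algebraic consequence.
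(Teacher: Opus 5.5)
Your proposal is correct and follows the paper's proof essentially step for step: the pullback of the entropy Hessian through R1 (with the linearity of $L$ killing the first-variation term), R3 applied to $Lu$ for the two-sided bounds, compression preserving positive semidefiniteness, and strict coercivity giving $\ker B=\ker L$. Your explicit domain conventions (R3 holding on $L(\mathcal H_R)$, $L$ defined on the retained directions) and your two-directional kernel argument simply spell out steps the paper leaves implicit.
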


\begin{proof}
Equation~\eqref{eq:B-pullback} follows from
Eq.~\eqref{eq:pullback-bilinear-form} and the retained-sector translation
$e_R=c_R-y_R$.
For any $u$,
\begin{equation}
    \langle u,B(r)u\rangle_p
    =
    \sigma_\xi^2
    \langle Lu,\mathcal K(r)Lu\rangle_h.
\end{equation}
Applying R3 yields Eq.~\eqref{eq:B-coercivity-bounds}. Compression
preserves positive semidefiniteness. Finally, strict coercivity implies
$\langle u,Bu\rangle_p=0$ if and only if $Lu=0$.
\end{proof}

The theorem should therefore be read as a pullback result: R3 supplies
the stable structural entropy metric, while the ReLU structural map
$L$ determines how that metric is represented in function space.

\subsection{Microstate curvature and structural smoothness}
\label{sec:smoothness-spectrum}

The spectral matching in Sec.~\ref{sec:operator-matching} involves the
finite-dimensional compressed operator $B_a(r)$. Let
\begin{equation}
    B_a(r)\phi_i=b_i(r)\phi_i,
    \qquad
    \phi_i\in\mathcal H_a,
    \qquad
    \|\phi_i\|_p=1.
    \label{eq:B-eigenproblem}
\end{equation}
Because $P_a\phi_i=\phi_i$,
\begin{equation}
    b_i(r)
    =
    \langle\phi_i,B(r)\phi_i\rangle_p.
\end{equation}
Applying Eq.~\eqref{eq:B-coercivity-bounds},
\begin{equation}
    \boxed{
    \sigma_\xi^2k_-\|L\phi_i\|_h^2
    \leq
    b_i(r)
    \leq
    \sigma_\xi^2k_+\|L\phi_i\|_h^2.
    }
    \label{eq:B-eigenvalue-smoothness-bound}
\end{equation}
Hence
\begin{equation}
    \boxed{
    b_i(r)\asymp\|L\phi_i\|_h^2.
    }
    \label{eq:B-smoothness-equivalence}
\end{equation}

For ReLU functions, $L=\mathcal CD^2$, so $\|L\phi_i\|_h$ measures
coarse-grained second-order content, including activation-boundary and
gradient-jump structure. We call directions with small
$\|L\phi\|_h$ \emph{structurally smooth}; throughout this paper,
``smooth'' in the ReLU specialization refers to low coarse-grained
structural curvature in this sense, rather than to a fixed Fourier
frequency notion.

\begin{corollary}[Smoothness interpretation]
\label{cor:smoothness-interpretation}
Within the active sector and under R1--R3, the spectrum of $B_a$
controls coarse-grained structural curvature up to the bounded
condition number
\begin{equation}
    \kappa_{\mathcal K}
    \equiv
    \frac{k_+}{k_-}.
\end{equation}
In particular,
\begin{equation}
    \kappa_{\mathcal K}^{-1}
    \frac{\|L\phi_i\|_h^2}{\|L\phi_j\|_h^2}
    \leq
    \frac{b_i}{b_j}
    \leq
    \kappa_{\mathcal K}
    \frac{\|L\phi_i\|_h^2}{\|L\phi_j\|_h^2}.
    \label{eq:small-b-smooth}
\end{equation}
Thus low-$B_a$ spectral sectors correspond, within the finite
distortion set by $\kappa_{\mathcal K}$, to sectors of low
coarse-grained structural curvature. Exact pairwise ordering of
$\|L\phi_i\|_h$ is not asserted when the structural metric is strongly
anisotropic. This limitation affects only the translation from the
$B_a$ spectrum to structural smoothness; the reverse-pairing statement
of Theorem~\ref{thm:reverse-pairing}, which is formulated directly in
terms of the $B_a$ eigenvalues, remains exact.
\end{corollary}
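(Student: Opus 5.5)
The plan is to obtain the ratio bound by dividing two instances of the eigenvalue bound Eq.~\eqref{eq:B-eigenvalue-smoothness-bound}. That bound already follows from Theorem~\ref{thm:relu-microstate-curvature} applied to the unit eigenvectors $\phi_i\in\mathcal H_a$ of $B_a$. The first step is to record, for indices $i$ and $j$,
\begin{equation*}
    \sigma_\xi^2k_-\|L\phi_i\|_h^2\leq b_i\leq\sigma_\xi^2k_+\|L\phi_i\|_h^2,
    \qquad
    \sigma_\xi^2k_-\|L\phi_j\|_h^2\leq b_j\leq\sigma_\xi^2k_+\|L\phi_j\|_h^2.
\end{equation*}
This uses $b_i=\langle\phi_i,B_a\phi_i\rangle_p=\langle\phi_i,B\phi_i\rangle_p$, which holds because $P_a\phi_i=\phi_i$, together with Eq.~\eqref{eq:B-coercivity-bounds}.

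The second step is to combine the upper bound for $b_i$ with the lower bound for $b_j$, which gives $b_i/b_j\leq (k_+/k_-)\,\|L\phi_i\|_h^2/\|L\phi_j\|_h^2$. Combining the lower bound for $b_i$ with the upper bound for $b_j$ gives $b_i/b_j\geq (k_-/k_+)\,\|L\phi_i\|_h^2/\|L\phi_j\|_h^2$. The factors $\sigma_\xi^2$ cancel, and the constants reduce to $\kappa_{\mathcal K}^{\pm1}$, which is Eq.~\eqref{eq:small-b-smooth}. The qualitative statement that low-$B_a$ sectors are low-curvature sectors then follows directly from Eq.~\eqref{eq:B-smoothness-equivalence}. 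The remark that exact ordering of $\|L\phi_i\|_h$ is not asserted is a consequence of the two-sided bound only controlling ratios up to the factor $\kappa_{\mathcal K}$; a simple two-dimensional anisotropic $\mathcal K$ illustrates this.

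The only real obstacle is well-definedness of the quotients. Division requires $b_j>0$ and $\|L\phi_j\|_h>0$, and by strict coercivity (the last claim of Theorem~\ref{thm:relu-microstate-curvature}) these two conditions are equivalent. When $L\phi_j=0$, both $b_j$ and $\|L\phi_j\|_h^2$ vanish, so the ratio statement should be read for indices outside $\ker B_a$. In that case one states the result in the cross-multiplied form $k_-\|L\phi_i\|_h^2\,\|L\phi_j\|_h^2\leq \ldots$, which holds trivially. I also need $L\phi_i$ to be defined, which is guaranteed by the choice of $\mathcal H_R\supseteq\mathcal H_a$ in Sec.~\ref{sec:relu-structural-field}. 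Finally, the reverse-pairing clause follows because Theorem~\ref{thm:reverse-pairing} never referenced $L$, so it is unaffected by this translation.
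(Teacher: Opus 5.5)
Your proposal is correct and follows the paper's own route: the paper gets the two-sided eigenvalue bound \eqref{eq:B-eigenvalue-smoothness-bound} from $b_i=\langle\phi_i,B(r)\phi_i\rangle_p$ (using $P_a\phi_i=\phi_i$) together with the coercivity bounds of Theorem~\ref{thm:relu-microstate-curvature}, and the ratio inequality is the quotient of two such bounds. Your restriction to indices with $b_j>0$ (equivalently $L\phi_j\neq0$) makes explicit a point the paper leaves implicit; the cross-multiplied form you left unfinished should read $k_-\|L\phi_i\|_h^2\,b_j\leq k_+\,b_i\|L\phi_j\|_h^2$ together with $k_-\,b_i\|L\phi_j\|_h^2\leq k_+\|L\phi_i\|_h^2\,b_j$, and both hold for all indices.
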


\subsection{Data-adaptive structural smoothness preference}
\label{sec:data-adaptive-smoothness}

The smoothness spectrum is defined in the data-weighted function space
\begin{equation}
    \mathcal H=L^2(\mathcal X,p),
\end{equation}
with
\begin{equation}
    \langle f,g\rangle_p
    =
    \int_{\mathcal X}f(x)g(x)p(x)\,dx.
\end{equation}
Consequently, the adjoint $L^\ast$, orthogonality of modes, and the
active-sector compression all depend on the geometry induced by
$p(x)$. Making this dependence explicit,
\begin{equation}
    B_{p,a}(r)
    =
    P_a
    \left[
        \sigma_\xi^2
        L_p^\ast\mathcal K_p(r)L
    \right]
    P_a
    \big|_{\mathcal H_a}.
    \label{eq:data-dependent-B}
\end{equation}
Thus the low-curvature sectors identified through $B_{p,a}$ are
intrinsic to the data-weighted functional geometry.

Combining this interpretation with
Theorem~\ref{thm:reverse-pairing}, the conditional fluctuation
free-energy contribution pairs large dynamical eigenvalues with the
low-$B_a$ sector, which corresponds up to the bounded distortion
$\kappa_{\mathcal K}$ to low-curvature data-adaptive function-space
directions:
\begin{equation}
    \boxed{
    m_{\mathrm{large}}
    \longleftrightarrow
    b_{\mathrm{small}}
    \longleftrightarrow
    \text{low structural-curvature sector}.
    }
    \label{eq:final-spectral-matching}
\end{equation}

\begin{corollary}[Data-adaptive structural smoothness preference]
\label{cor:smooth-feature-matching}
Under R1--R3 and within the stable active sector, the conditional
orientational free-energy contribution favors pairing larger
eigenvalues of $M$ with the low-$B_a$ data-adaptive sector. Through
Eq.~\eqref{eq:B-eigenvalue-smoothness-bound}, this is a bias toward
directions of lower coarse-grained structural curvature up to the
finite anisotropy factor $\kappa_{\mathcal K}$. Since $m_i$ sets the
gradient-flow relaxation rate along the corresponding eigendirection,
the fluctuation contribution therefore favors faster relaxation in
this low-curvature sector.
\end{corollary}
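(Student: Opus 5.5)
The plan is to assemble Corollary~\ref{cor:smooth-feature-matching} from results already established, so the argument is essentially a chaining of earlier statements; most of the care goes into checking that their hypotheses are met. \textbf{First}, I invoke Theorem~\ref{thm:relu-microstate-curvature}. Under R1--R3 it gives $B(r)=\sigma_\xi^2L^\ast\mathcal K(r)L\succeq0$ on $\mathcal H_R$. Since compression by $P_a$ preserves positive semidefiniteness, $B_a(r)\succeq0$ on $\mathcal H_a$. This is exactly the stability condition~\eqref{eq:thermodynamic-stability-B}. Together with $M\succ0$ on $\mathcal H_a$, it yields $OM^{-1}O^\ast+B_a\succ0$ along the whole fixed-spectrum orbit, so $\Phi_{\mathrm{orient}}$ is well defined and continuous there. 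Hence the hypotheses of Theorems~\ref{thm:operator-matching} and~\ref{thm:reverse-pairing} hold with $B\equiv B_a(r)$.

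\textbf{Second}, I apply Theorem~\ref{thm:reverse-pairing}. On the orbit, $\Phi_{\mathrm{orient}}(M;B_a)=\frac{\sigma_\xi^2}{2}\log\det(I+MB_a)$ is globally minimized at a commuting configuration. In that configuration the eigenvectors $\phi_i$ of $B_a$, ordered with $b_1\le\cdots\le b_n$, are eigenvectors of $M$ with eigenvalues $m_1\ge\cdots\ge m_n$. So the largest eigenvalues of $M$ sit on the low-$B_a$ sector. Corollary~\ref{cor:rotational-restoring} adds that this configuration is locally restoring in every nondegenerate pair, which is what makes ``favors'' a statement about a genuine preference rather than just a stationary point. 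Since $B_a$ is built with the $p$-weighted adjoint $L_p^\ast$ and $p$-orthogonal compression, as in~\eqref{eq:data-dependent-B}, the eigenbasis $\{\phi_i\}$ is data-adaptive.

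\textbf{Third}, I translate the spectrum of $B_a$ into structural curvature using~\eqref{eq:B-eigenvalue-smoothness-bound}, namely $\sigma_\xi^2k_-\|L\phi_i\|_h^2\le b_i\le\sigma_\xi^2k_+\|L\phi_i\|_h^2$. Corollary~\ref{cor:smoothness-interpretation} then gives ratio comparisons with distortion at most $\kappa_{\mathcal K}$. Concretely, if $b_i\le b_j$, then $\|L\phi_i\|_h^2\le\kappa_{\mathcal K}\|L\phi_j\|_h^2$. Finally, by~\eqref{eq:error-gradient-flow}, the component of $e$ along an eigendirection of $M$ relaxes at rate $m_i$. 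Large $m_i$ therefore means faster relaxation, which gives the final interpretive sentence.

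\textbf{Main obstacle.} The step needing the most care is keeping the conclusion within what is actually proved, rather than any computation. The bias belongs only to the conditional fluctuation term, with $r$ and $B_a$ held fixed; it is not a statement about the full free energy. The smoothness ordering is also only up to $\kappa_{\mathcal K}$, not an exact ordering of $\|L\phi_i\|_h$. A further subtlety is making sure that $L\phi_i$ is defined for $\phi_i\in\mathcal H_a$. This requires $\mathcal H_a\subseteq\mathcal H_R$, with $\mathcal H_R$ chosen so that $L$ is defined on it. I would state that requirement explicitly, and note the degenerate cases ($m_i=m_j$ or $b_i=b_j$), where the pairing is not unique but the corollary's claim is unaffected.
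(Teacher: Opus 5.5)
Your proposal is correct and follows essentially the same route as the paper. Like the paper, you chain Theorem~\ref{thm:relu-microstate-curvature} (to place $B_a$ in the stable sector $B_a\succeq0$), Theorem~\ref{thm:reverse-pairing} (for $m_{\mathrm{large}}\leftrightarrow b_{\mathrm{small}}$), the bounds~\eqref{eq:B-eigenvalue-smoothness-bound} with distortion $\kappa_{\mathcal K}$, the $p$-dependence in~\eqref{eq:data-dependent-B}, and the relaxation rates from~\eqref{eq:error-gradient-flow}; your explicit hypothesis check and the division-free form $b_i\le b_j\Rightarrow\|L\phi_i\|_h^2\le\kappa_{\mathcal K}\|L\phi_j\|_h^2$ make precise what the paper leaves implicit.
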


The result is a statement about the geometry preferred by the
conditional fluctuation contribution. Whether the full training
dynamics realizes this preference depends on the remaining slow
operator dynamics and on how parameter motion can realize changes in
$M$.

\section{Discussion and Future Work}
\label{sec:discussion}

The function-space thermodynamic picture developed here is broadly
consistent with several empirical regularities reported in neural
networks. Classical observations of spectral bias indicate that
smoother or lower-frequency components are often learned earlier
\cite{rahaman2019spectral}. More recently, diffusion denoisers have
been found to develop geometry-adaptive harmonic representations:
their learned input--output Jacobians organize into data-dependent
eigendirections whose ordering is closely related to smoothness
\cite{kadkhodaie2024geometry}. The operator measured in that work is
not the learning operator $M=J_\theta J_\theta^\ast$ considered here,
so this is not a direct test of our theory. Nevertheless, the observed
organization is qualitatively consistent with the geometry favored by
our conditional fluctuation contribution,
\begin{equation}
    m_{\mathrm{large}}
    \longleftrightarrow
    b_{\mathrm{small}}
    \longleftrightarrow
    \text{low structural-curvature data-adaptive sector}.
\end{equation}

The same macroscopic language can also accommodate qualitatively
different learning regimes. If $M$ remains effectively fixed, the
description reduces to a kernel-like regime. If additional slow
dynamics allow $M$ to respond to the conditional thermodynamic force
derived here, the same framework supplies a bias toward regular
operator matching. Conversely, apparently sharp macroscopic behavior
need not have a unique origin: it may reflect competition between
distinct macroscopic states, or it may arise from a broad hierarchy of
relaxation times even when the underlying dynamics remain continuous.
The latter possibility is conceptually related to quantized models of
neural scaling, in which smooth aggregate scaling can coexist with the
sudden appearance of individual capabilities
\cite{michaud2023quantization}.

Grokking provides a suggestive example of the former possibility.
Previous work has connected delayed generalization to structured
representations \cite{liu2022grokking}, escape from an early
kernel-like regime in modular addition
\cite{mohamadi2024grokking}, and first-order phase transitions between
representation phases in two-layer teacher--student models
\cite{rubin2024grokking}. Our framework highlights an endogenous route
by which the relative statistical preference of macroscopic states can
change during training. Because the density of states $\Omega(e)$ and
its local curvature $B(r)$ depend on the current error state, the
statistical geometry sampled by the theory changes as $r=r(t)$
evolves. If two macroscopic branches coexist, their \emph{total}
conditional free energies, denoted schematically by
$\mathcal G_A(r)$ and $\mathcal G_G(r)$, may therefore cross. These
symbols refer to full branch free energies, including macrostate,
spectral, rank-dependent, and fluctuation contributions; such a
term-by-term global branch theory is not constructed in the present
work. Schematically,
\begin{equation}
    \mathcal G_A(r_\ast)=\mathcal G_G(r_\ast),
    \qquad
    \mathcal G_A(r)-\mathcal G_G(r)
    \ \text{changes sign across }r_\ast.
\end{equation}
In this picture, the training state itself can act as an endogenous
control variable. Establishing the relevant competing branches,
barriers, and transition dynamics lies beyond the local conditional
theory developed here and is left for future work.

Several limitations define natural extensions. First, the present
construction assumes an isotropic stochastic source in function space.
More generally, the noise may possess its own covariance geometry
$Q$, leading schematically to
\begin{equation}
    D\propto MQM.
\end{equation}
The resulting conditional statistical mechanics need not remain
reversible, and the operator preference may differ from the one derived
here; nonequilibrium extensions are therefore an important direction.
Second, $\Phi_{\mathrm{fluc}}(M;B)$ is the free-energy contribution
of the local conditional fluctuation sector, not a derivation of the
complete slow dynamics of $M$. The macrostate term
$\frac12\langle r_a,M^{-1}r_a\rangle_p$ already supplies a distinct
error-directed orientational preference, and further slow terms may
also be present. Interpreting the fluctuation gradient as an actual
component of training dynamics requires that these other contributions
do not cancel or overwhelm it. Third, our conditional ensemble does
not require a time-scale separation, but interpreting it as a
quasi-static distribution realized along a training trajectory would
require additional local-equilibration assumptions. Finally, all
operator matching results are formulated on a fixed finite-dimensional
active sector, and the ability of parameter dynamics to realize the
corresponding operator rotations remains architecture dependent.

\section{Conclusion}
\label{sec:conclusion}

We developed a statistical-mechanical description of neural-network
learning directly in function space. Parameter configurations provide
the microscopic realizations, while functions and the operators
governing their evolution provide macroscopic variables. This
separation makes it possible to distinguish the dynamical geometry of
learning from statistical constraints induced by the underlying
parameterization.

For mean-squared loss, the exact error dynamics determines a bare
conditional dynamical weight. Combining this weight with
parameter-space microstate multiplicity produces a conditional
function-space ensemble whose local statistical curvature is described
by $B$. On the finite-dimensional active sector, integrating over local
error fluctuations yields a conditional free-energy contribution
\begin{equation}
    \Phi_{\mathrm{fluc}}(M;B)
    =
    \frac{\sigma_\xi^2}{2}
    \log\det(M^{-1}+B)
    +\mathrm{const}.
\end{equation}
At fixed spectrum, this contribution is stationary when
$[M,B]=0$, is minimized by the reverse pairing
\begin{equation}
    m_{\mathrm{large}}
    \longleftrightarrow
    b_{\mathrm{small}},
\end{equation}
and supplies a local restoring force contribution against rotational
mismatch.

For ReLU-type function spaces, stable structural statistics satisfying
R1--R3 give
\begin{equation}
    B=\sigma_\xi^2L^\ast\mathcal K L,
\end{equation}
whose active-sector spectrum measures coarse-grained structural
curvature up to the bounded anisotropy of the structural metric. The
conditional thermodynamic contribution therefore favors pairing faster
relaxation with the low-curvature, data-adaptive sector.

These results suggest that function space provides a natural
macroscopic level for the statistical mechanics of learning. In this
organization of the theory, a concrete neural network is a microscopic
realization rather than the starting point: the function-space
organizing principle is formulated first, while architecture and
parameterization determine which operators and microstate geometries
realize it. The present theory isolates one thermodynamic contribution
within this framework, providing a basis for studying more general slow
dynamics and nonequilibrium extensions.

\bibliography{main}
\bibliographystyle{tmlr}

\appendix

\section{Residual-Preserving Rotations of the Local Conditional Free Energy}
\label{app:residual-preserving}

The main text isolates the fluctuation-induced contribution
$\Phi_{\mathrm{fluc}}$ to the local conditional free energy. Here we
record a restricted setting in which this contribution is also the
complete orientational variation of the quadratic local free energy
within the conditional construction of Sec.~\ref{sec:local-microstate-curvature}.

Let $r_a=P_a r\in\mathcal H_a$ be the active component of the
conditioned macrostate. For notational simplicity within this appendix,
write $r\equiv r_a$ and assume $r\neq0$. Define its stabilizer subgroup
\begin{equation}
    G_r
    \equiv
    \left\{
        O\in SO(\mathcal H_a): Or=r
    \right\}.
\end{equation}
Its infinitesimal generators satisfy
\begin{equation}
    \Xi^\ast=-\Xi,
    \qquad
    \Xi r=0.
\end{equation}
Consider the fixed-spectrum orbit
\begin{equation}
    M(O)=OMO^\ast,
    \qquad
    O\in G_r.
\end{equation}

\begin{proposition}[Residual-preserving isolation of the fluctuation term]
\label{prop:residual-preserving-isolation}
For every $O\in G_r$,
\begin{equation}
    \frac12
    \left\langle
        r,
        M(O)^{-1}r
    \right\rangle_p
    =
    \frac12
    \langle r,M^{-1}r\rangle_p.
\end{equation}
Hence, at fixed conditioned macrostate $r$ and fixed density-of-states
geometry $B_a(r)$, the orientational variation of the quadratic local
free energy
\begin{equation}
    \mathcal G_{\mathrm{loc}}
    =
    F_M(r;z)
    +
    \Phi_{\mathrm{fluc}}(M;B_a(r))
    +
    \mathrm{const}
\end{equation}
along $G_r$ is exactly the orientational variation of
$\Phi_{\mathrm{fluc}}$.
\end{proposition}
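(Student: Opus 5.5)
The argument is direct: use orthogonality of $O$ to move the rotation onto $r$, then use $Or=r$. First record that for $O\in G_r\subset SO(\mathcal H_a)$ we have $O^\ast=O^{-1}$. Since the spectrum is fixed and $M\succ0$ on $\mathcal H_a$, the rotated operator $M(O)=OMO^\ast$ is invertible with
\begin{equation}
    M(O)^{-1}=(O^\ast)^{-1}M^{-1}O^{-1}=OM^{-1}O^\ast .
\end{equation}
Then
\begin{equation}
    \langle r,M(O)^{-1}r\rangle_p
    =\langle r,OM^{-1}O^\ast r\rangle_p
    =\langle O^\ast r,M^{-1}O^\ast r\rangle_p .
\end{equation}
The stabilizer condition $Or=r$ gives $O^\ast r=O^{-1}r=r$, so the right-hand side equals $\langle r,M^{-1}r\rangle_p$. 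This proves the displayed identity. Equivalently, in infinitesimal form, the derivative along $O(t)=e^{t\Xi}$ is $\langle r,[\Xi,M^{-1}]r\rangle_p$. This vanishes because $\Xi r=0$ and $\langle r,\Xi\,\cdot\rangle_p=-\langle\Xi r,\cdot\rangle_p=0$. One could mention this as a consistency check, but the finite version already suffices and needs no connectedness argument.

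For the second claim, expand $F_M(r;z)=\tfrac12\langle r_a,M^{-1}r_a\rangle_p-\sigma_\xi^2\log\Omega_z(r)$ as in the main text. The entropic term $-\sigma_\xi^2\log\Omega_z(r)$ depends only on $r$ and the background $z$. In the partial conditional comparison, $r$ and $z$ are held fixed, and so is $B_a(r)$. Hence this term does not vary along $G_r$. The first part of the proof shows that the macrostate term is also constant along $G_r$. The additive constant in $\mathcal G_{\mathrm{loc}}$ depends only on $n$ and $\sigma_\xi^2$, which are unchanged on the fixed-rank orbit. Therefore
\begin{equation}
    \mathcal G_{\mathrm{loc}}(r,M(O);z)-\mathcal G_{\mathrm{loc}}(r,M;z)
    =\Phi_{\mathrm{fluc}}(M(O);B_a(r))-\Phi_{\mathrm{fluc}}(M;B_a(r))
\end{equation}
for all $O\in G_r$.

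The main point needing care is bookkeeping rather than computation. We must make explicit that $\Omega_z$ is not re-evaluated when $M$ is rotated, since $z$ was defined to include $M$. This is exactly the partial-comparison convention stated in Sec.~\ref{sec:operator-matching}, and I would invoke it explicitly.
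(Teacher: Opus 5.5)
Your proof is correct and follows the paper's argument: write $M(O)^{-1}=OM^{-1}O^\ast$, move $O^\ast$ onto $r$ using $O^\ast r=r$, and note that the entropic term $-\sigma_\xi^2\log\Omega_z(r)$ does not change when $r$ is held fixed. You also point out that $\Omega_z$ must not be re-evaluated even though $z$ nominally contains $M$, which usefully states a partial-comparison convention that the paper's proof relies on without saying so.
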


\begin{proof}
Since $M(O)^{-1}=OM^{-1}O^\ast$ and $O^\ast r=r$,
\begin{equation}
    \langle r,M(O)^{-1}r\rangle_p
    =
    \langle O^\ast r,M^{-1}O^\ast r\rangle_p
    =
    \langle r,M^{-1}r\rangle_p.
\end{equation}
At fixed $r$, the density-of-states term
$-\sigma_\xi^2\log\Omega_z(r)$ is also independent of the rotation.
Therefore only $\Phi_{\mathrm{fluc}}$ varies along the
residual-preserving orbit.
\end{proof}

For the remainder of the appendix, write
\begin{equation}
    B\equiv B_a(r),
    \qquad
    H=M^{-1}+B.
\end{equation}
Let
\begin{equation}
    P_\perp
    \equiv
    I-
    \frac{rr^\ast}{\|r\|_p^2}
\end{equation}
denote the orthogonal projector onto $r^\perp\cap\mathcal H_a$.
For a general residual-preserving infinitesimal rotation, the first
variation from Eq.~\eqref{eq:Phi-rotational-variation} becomes
\begin{equation}
    \delta\Phi_{\mathrm{fluc}}
    =
    \frac{\sigma_\xi^2}{2}
    \operatorname{Tr}
    \left(
        [M^{-1},H^{-1}]\Xi
    \right),
    \qquad
    \Xi r=0,
\end{equation}
Therefore stationarity with respect to all such rotations is equivalent to
\begin{equation}
    \boxed{
    P_\perp
    [M^{-1},H^{-1}]
    P_\perp
    =
    0.
    }
    \label{eq:restricted-stationarity-general}
\end{equation}
This is the exact restricted stationarity condition without any
additional invariant-subspace assumption.

A particularly transparent case is obtained when the residual
direction is a common invariant mode of both operators,
\begin{equation}
    Mr=m_r r,
    \qquad
    Br=b_r r.
    \label{eq:residual-common-mode}
\end{equation}
Then $\operatorname{span}\{r\}$ and $r^\perp$ are invariant under both
$M$ and $B$, and we may write
\begin{equation}
    M
    =
    m_r P_r
    \oplus
    M_\perp,
    \qquad
    B
    =
    b_r P_r
    \oplus
    B_\perp,
\end{equation}
where $P_r=rr^\ast/\|r\|_p^2$. Residual-preserving rotations act only
on the $(n-1)$-dimensional orthogonal block. The fluctuation
orientational contribution factorizes as
\begin{equation}
    \Phi_{\mathrm{fluc}}
    =
    \frac{\sigma_\xi^2}{2}
    \log(1+m_rb_r)
    +
    \frac{\sigma_\xi^2}{2}
    \log\det_{r^\perp}
    (I_\perp+M_\perp B_\perp)
    +
    \mathrm{const}.
\end{equation}
The first term is fixed on $G_r$. Applying
Theorems~\ref{thm:operator-matching} and
\ref{thm:reverse-pairing} to the orthogonal block gives
\begin{equation}
    [M_\perp,B_\perp]=0
\end{equation}
at restricted stationary orientations, and the restricted minimum
pairs the eigenvalues in reverse order,
\begin{equation}
    (m_\perp)_{\mathrm{large}}
    \longleftrightarrow
    (b_\perp)_{\mathrm{small}}.
\end{equation}
Thus, whenever the current residual direction forms a common invariant
mode, the reverse-pairing result is an exact statement about the total
quadratic local free energy on the residual-preserving orbit. The main
text does not require this additional condition; it characterizes the
fluctuation-induced contribution on the full fixed-spectrum orbit.

If $r=0$, the stabilizer is the full orthogonal group and the
distinction disappears: the macrostate term vanishes, so the
fluctuation contribution is the complete quadratic orientational
dependence.

\end{document}